\newcommand{\df}[1]{\mathrm{d}#1}
\def\rva{{\mathbf{a}}}
\def\rvb{{\mathbf{b}}}
\def\rvc{{\mathbf{c}}}
\def\rve{{\mathbf{e}}}
\def\rvu{{\mathbf{i}}}
\def\rvk{{\mathbf{k}}}
\def\rvn{{\mathbf{n}}}
\def\rvs{{\mathbf{s}}}
\def\rvu{{\mathbf{u}}}
\def\rvx{{\mathbf{x}}}
\def\rvy{{\mathbf{y}}}
\def\rvz{{\mathbf{z}}}
\def\vone{{\bm{1}}}
\def\mSigma{{\bm{\Sigma}}}
\def\mA{{\bm{A}}}
\def\mB{{\bm{B}}}
\def\mC{{\bm{C}}}
\def\mF{{\bm{F}}}
\def\mI{{\bm{I}}}
\def\mK{{\bm{K}}}
\def\mP{{\bm{P}}}
\def\mQ{{\bm{Q}}}
\def\mS{{\bm{S}}}
\def\mU{{\bm{U}}}
\def\mV{{\bm{V}}}
\def\mW{{\bm{W}}}
\def\mX{{\bm{X}}}
\def\mY{{\bm{Y}}}
\def\mSigma{{\bm{\Sigma}}}
\DeclareMathAlphabet{\mathsfit}{\encodingdefault}{\sfdefault}{m}{sl}
\SetMathAlphabet{\mathsfit}{bold}{\encodingdefault}{\sfdefault}{bx}{n}
\def\gC{{\mathcal{C}}}
\def\gE{{\mathcal{E}}}
\def\gF{{\mathcal{F}}}
\def\gK{{\mathcal{K}}}
\def\gL{{\mathcal{L}}}
\def\gN{{\mathcal{N}}}
\def\gR{{\mathcal{R}}}
\def\gT{{\mathcal{T}}}
\def\sI{{\mathbb{I}}}
\def\sR{{\mathbb{R}}}
\def\sS{{\mathbb{S}}}
\newcommand{\E}{\mathbb{E}}
\newcommand{\Var}{\mathrm{Var}}
\newcommand{\Cov}{\mathrm{Cov}}
\newcommand{\Tr}{\mathrm{Tr}}
\newtheorem{theorem}{Theorem}
\newtheorem{lemma}{Lemma}
\newtheorem{proposition}{Proposition}
\newtheorem*{example}{Example}
\newtheorem{remark}{Remark}
\title{Enhancing Low-resolution Image Representation Through Normalizing Flows\thanks{Submitted to the editors DATE.
{This work was partially supported by National Key R\&D Program of China (No.2021YFA1001300), the National Natural Science Foundation of China (No.12271291), the New Cornerstone Investigator Program (NCI202310).}}}
\author{
Chenglong Bao\thanks{Yau Mathematical Sciences Center, Tsinghua University, Beijing, China, and Beijing Institute of Mathematical Sciences and Applications, Beijing, China({clbao@tsinghua.edu.cn})}
\and
Tongyao Pang\thanks{Yau Mathematical Sciences Center, Tsinghua University, Beijing, China({typang@tsinghua.edu.cn}).}
\and
Zuowei Shen\thanks{Department of Mathematics, National University of Singapore, Singapore ({matzuows@nus.edu.sg}).}
\and 
Dihan Zheng\thanks{Department of Pharmaceutical Chemistry, University of California, San Francisco, CA, USA ({dihan.zheng@ucsf.edu}).}
\and
Yihang Zou\thanks{Yau Mathematical Sciences Center and Department of Mathematical Sciences, Tsinghua University, Beijing, China ({zou-yh21@mails.tsinghua.edu.cn}).}
}
\date{}
\begin{document}
\maketitle
\begin{abstract} 
Low-resolution image representation is a special form of sparse representation that retains only low-frequency information while discarding high-frequency components. This property reduces storage and transmission costs and benefits various image processing tasks. However, a key challenge is to preserve essential visual content while maintaining the ability to accurately reconstruct the original images. This work proposes LR2Flow, a nonlinear framework that learns low-resolution image representations by integrating wavelet tight frame blocks with normalizing flows. We conduct a reconstruction error analysis of the proposed network, which demonstrates the necessity of designing invertible neural networks in the wavelet tight frame domain. Experimental results on various tasks, including image rescaling, compression, and denoising, demonstrate the effectiveness of the learned representations and the robustness of the proposed framework.
\end{abstract}

\textbf{Key words.} low-resolution representation, normalizing flow, wavelet tight frame, multi-scale representation, reconstruction error analysis



\section{Introduction}
With the rapid advancement of imaging devices, both the volume and resolution of acquired images have increased significantly, posing substantial challenges for transmission and storage. While applying standard compression techniques or reducing resolution via downsampling are natural approaches, they raise a fundamental question: \emph{How can one obtain a low-resolution (LR) image that allows for the accurate reconstruction of its high-resolution (HR) counterpart?}
This challenge is known as the \emph{LR representation} problem.
Mathematically, let $\rvx \in \mathbb{R}^n$ denote an HR image, and let its LR representation reside in $\mathbb{R}^d$ with $d < n$. Our objective is to design a downscaling operator $\varphi:\mathbb{R}^{n}\to\mathbb{R}^{d}$ and an upscaling operator $\psi:\mathbb{R}^{d}\to\mathbb{R}^{n}$ that satisfy
\begin{equation}
\label{eqn:goal}
\rvx \approx \psi(\varphi(\rvx)) 
\quad \text{and} \quad 
\varphi(\rvx) \approx \downarrow_{n/d}(\rvk^{\text{lp}} \ast \rvx),
\end{equation}
where $\rvk^{\text{lp}}$, $\ast$, and $\downarrow_{n/d}$ denote a predefined low-pass filter, the convolution operator, and the downsampling operator with a rate of $n/d$, respectively. We refer to $\varphi(\rvx)$ as the LR representation of $\rvx$.
The first condition in~\eqref{eqn:goal} ensures the reconstructibility of the HR image, while the second encourages the preservation of global appearance. Consequently, this model yields a compact surrogate representation well-suited for compression and edge-to-cloud collaborative inference in low-bandwidth scenarios~\cite{choi2022scalable, xiao2020invertible, shao2020bottlenet++}.
The resulting framework is applicable to a wide range of tasks, including image rescaling~\cite{xiao2020invertible,liang2021hierarchical,bao2025tinvblock,wang2025timestep}, image compression~\cite{bao2025tinvblock,yang2023self,xiao2023invertible}, and image restoration~\cite{liu2021invertible}.

LR representation is closely related to sparse representation. In the classical setting, one seeks a sparse coefficient vector $\rvc \in \mathbb{R}^{N}$ for an image $\rvx \in \mathbb{R}^{n}$ such that $\mW^{\top}\rvc \approx \rvx$, where $\mW$ is a predefined wavelet tight frame~\cite{ron1997affine,daubechies2003framelets} or a data-driven transformation~\cite{aharon2006k,quan2015data,cai2014data}. One typical construction of $\mW$ involves selecting a low-pass filter $\rvk^{\text{lp}}$ and a set of high-pass filters $\{\rvk^{\text{hp}}_{i}\}_{i=1}^{r}$ that satisfy certain extension principles. We define $\mW = (\mW_L^{\top}, \mW_H^{\top})^{\top}$, where the low- and high-frequency decomposition operators are given by
\begin{equation*}
\mW_L: \rvx \mapsto \downarrow_{2}(\rvk^{\text{lp}} \ast \rvx)\in\sR^{\frac{n}{2}},
\quad
\mW_H: \rvx \mapsto \big(\downarrow_{2}(\rvk^{\text{hp}}_{1} \ast \rvx); \ldots; \downarrow_{2}(\rvk^{\text{hp}}_{r} \ast \rvx) \big)\in\sR^{\frac{nr}{2}}
\end{equation*}
The sparse coefficient $\rvc$ is typically formed by preserving the low-frequency coefficient $\rvx_{L} = \mW_L\rvx$ and applying a nonlinear thresholding operator (or a variant)~\cite{donoho2002noising} to the high-frequency coefficients $\rvx_{H}=\mW_{H}\rvx$. This results in the representation $\rvc = (\rvx_{L}, \gT(\rvx_{H}))$, where $\gT$ denotes the thresholding-induced nonlinear mapping. Thus, $\rvc$ becomes an LR representation when $\gT(\rvx_{H}) = \mathbf{0}$. In this setting, $d=n/2$, the downscaling and upscaling operators are 
\begin{equation*}
\varphi: \rvx \mapsto [\mW\rvx]_{1:d},
\quad
\psi:\rvy \mapsto \mW^{\top}(\rvy; \mathbf{0}_{N-d}),
\end{equation*}
where $\rvu_{s:t} = (u_{i})_{s\leq i \leq t}$ denotes sub-vectors, and $\mathbf{0}_{N-d} \in \sR^{N-d}$ is a zero vector. While the downscaling operator \(\varphi(\rvx)\) preserves the primary visual features of \(\rvx\), the reconstructed image \(\psi(\varphi(\rvx)) = \mW_L^{\top}\mW_L \rvx\) inevitably suffers from a loss of high-frequency information.

Motivated by the above observations, we propose a framework that jointly learns the downscaling and upscaling operators within a designed \emph{invertible} architecture, with the goal of enhancing the recovery of high-frequency information. 
Specifically, let $\mW \in \mathbb{R}^{N \times n}$ be a wavelet tight frame and let $\gF:\mathbb{R}^N \to \mathbb{R}^N$ denote an invertible mapping. 
The downscaling and upscaling operators $\varphi$ and $\psi$ are defined as
\begin{equation*}
\varphi:\ \rvx \mapsto \bigl[\gF(\mW \rvx)\bigr]_{1:d},
\qquad 
\psi:\ \rvy \mapsto \mW^{\top}\gF^{-1} \circ \gE(\rvy),
\end{equation*}
where $\gE:\mathbb{R}^{d} \to \mathbb{R}^{N}$ is an extension operator. 
By construction, this design yields a compact latent representation.
To further enhance reconstruction, we map the remaining coordinates $[\gF(\mW\rvx)]_{(d+1):N}$ to a prior distribution $p$. This allows us to discard these components during transmission and restore them via sampling, effectively defining $\gE(\rvy) = (\rvy, \rvz)$ where $\rvz \sim p$. Consequently, the final downscaling and upscaling operators are formulated as
\begin{equation}
\label{eq:flow_model}
\varphi(\rvx) = \left[\gF\left(\mW \rvx\right)\right]_{1:d},
\quad 
\psi(\rvy) = \E_{p(\rvz)}\left[\mW^\top\gF^{-1}\left(\rvy,\rvz\right)\right].
\end{equation}

We parameterize $\gF$ using invertible neural networks, such as Normalizing Flows (NFs)~\cite{dinh2016density,kingma2018glow} or Invertible Residual Networks (iResNets)~\cite{behrmann2019invertible}, and apply this architecture progressively to the wavelet tight frame coefficients at each level. From a theoretical perspective, we analyze the reconstruction error of $\hat{\rvx} = \psi(\varphi(\rvx))$ obtained from the learned mapping, which provides insight into the necessity of designing a nonlinear $\gF$ to align with the underlying data distribution in order to obtain an effective LR representation. Furthermore, our derived reconstruction error bound demonstrates the theoretical advantage of using a redundant wavelet tight frame system over an orthonormal basis, which is also verified empirically through numerical experiments. Finally, we evaluate our model across various image processing tasks, demonstrating its superior performance and broad applicability.

\textbf{Notations.} We denote vectors using bold lowercase letters (e.g., $\rva$), where $a_i$ refers to the $i$-th component. Matrices are denoted by bold uppercase letters (e.g., $\mA$), with $A_{ij}$ representing the entry in the $i$-th row and $j$-th column. For a vector $\rvu \in \mathbb{R}^{m}$ and integers $1\leq s\leq t\leq m$, we define the slice $\rvu_{s:t} = (u_{i})_{s\leq i \leq t}$. The $\ell_p$-norm of a vector $\rvu$ is defined as $\|\rvu\|_{p} = \left(\sum_{i}|u_i|^p\right)^{1/p}$. Unless otherwise specified, $\|\rvu\|$ denotes the Euclidean norm $\|\rvu\|_{2}$.

\section{Related Work}

In this section, we briefly review related work on LR representation in the context of image rescaling and compression.

Image rescaling seeks a forward mapping from HR domain to LR representation and an inverse mapping for reconstruction. The goal is to produce visually coherent LR images that allow for faithful HR restoration. Prior works generally fall into three categories: (i) methods that fix the downscaling kernel (typically Bicubic interpolation) and train a super-resolution (SR) network for upscaling~\cite{dong2015image,ahn2018fast,lim2017enhanced,zhang2018image,dai2019second,liang2021swinir,chen2023activating}; (ii) frameworks that jointly optimize downscaling and upscaling in an end-to-end encoder-decoder architecture~\cite{wang2025timestep,kim2018task,sun2020learned}; and (iii) approaches employing NFs to model these stages as mutually invertible processes~\cite{xiao2020invertible,liang2021hierarchical,bao2025tinvblock,xiao2023invertible}. However, categories (i) and (ii) often exhibit limited synergy between stages. By coupling downscaling and upscaling primarily through reconstruction loss, these methods lack geometric regularization and fail to exploit the reciprocal structure of the task or account for information loss inherent in downscaling~\cite{xiao2020invertible}. In contrast, flow-based methods formulate rescaling as a multiscale bijection, enabling both visually consistent LR representations and robust reconstruction.

Although image rescaling inherently compresses data by reducing spatial resolution, applying standard lossy compression (e.g., JPEG) to the LR representation allows for significantly higher compression ratios. Consequently, this approach has emerged as a prominent research direction. Recent works~\cite{xing2023scale,yang2023self,bao2025tinvblock} implement this strategy by employing a differentiable JPEG simulator to facilitate end-to-end training of the rescaling-based compression network. Furthermore, to mitigate distribution shifts induced by lossy compression in the LR domain, \cite{yang2023self,bao2025tinvblock,xiao2023invertible} utilize auxiliary JPEG decoders to eliminate compression artifacts.

While flow-based architectures have achieved significant success in image rescaling and compression, existing designs predominantly rely on a fixed orthogonal basis for frequency decomposition~\cite{liu2021invertible,xiao2020invertible,xiao2023invertible,liang2021hierarchical,bao2025tinvblock,yang2023self}. Despite their empirical efficacy, a rigorous theoretical analysis of the reconstruction error in these methods remains largely unexplored.
Furthermore, it is well-established that redundant representations enhance stability in image reconstruction~\cite{elad2006image,coifman1995translation,cai2009linearized,goyal2001quantized}, as the corresponding inverse transformation operator possesses a nontrivial kernel. Motivated by this, we integrate invertible neural networks with wavelet tight frames to capitalize on these advantages. To the best of our knowledge, we present the first theoretical reconstruction error analysis for such an architecture.

\section{The LR2Flow Model}

This section presents the architecture of the proposed LR representation model, 
referred to as LR2Flow. Subsequently, we provide the approximation analysis of this model.

\subsection{The Construction of LR2Flow}
\label{subsec:model_arch}
Recall from~\eqref{eq:flow_model} that the downscaling and upscaling operators are 
\(\varphi(\rvx) = \bigl[\gF(\mW \rvx)\bigr]_{1:d}\) and
\(\psi(\rvy) = \mathbb{E}_{p(\rvz)}\!\left[\mW^{\top}\gF^{-1}(\rvy,\rvz)\right]\).
In this work, we assume that \(\rvz\) is a random variable drawn from a standard Gaussian distribution. The operator \(\mW = (\mW_L^{\top}, \mW_H^{\top})^{\top}\) is chosen as a wavelet tight frame generated by linear B-splines~\cite{cai2012image}, which consists of one low-pass filter and two high-pass filters. Consequently, this transformation decomposes an input image \(\rvx \in \mathbb{R}^{n}\) into a low-frequency coefficient vector \(\rvx_L = \mW_L \rvx \in \mathbb{R}^{n/2}\) and a high-frequency coefficient vector \(\rvx_H = \mW_H \rvx \in \mathbb{R}^{n}\).
Assuming the LR representation dimension is $d=n/2^T$ for some integer $T$, we represent $\gF$ as the composition $\gF = \gF^{(T)}\circ\gF^{(T-1)}\circ\cdots\circ\gF^{(1)}$, where
\begin{equation*}
\gF^{(l)}=
\begin{cases}
f^{(1)}: \sR^{\frac{3n}{2}} \to \sR^{\frac{3n}{2}}, & l=1; \\
(f^{(l)} \circ \mW) \times \text{Id}_{h_{l-1}}: \sR^{\frac{n}{2^{l-1}}}\times \sR^{h_{l-1}} \to \sR^{\frac{3n}{2^{l-1}}}\times \sR^{h_{l-1}}, & 2\leq l\leq T.
\end{cases}
\end{equation*}
Here, $h_l = 2n(1-2^{-l})$ denotes the total dimension of the high-frequency components accumulated from the previous $l$ levels, $\mathrm{Id}_{h_l}:\sR^{h_l}\to \sR^{h_l}$ represents the identity mapping, and $f^{(l)}:\sR^{\frac{3n}{2^{l}}}\to \sR^{\frac{3n}{2^{l}}}$ constitutes the flow at the $l$-th level. Specifically, $f^{(l)}$ comprises a sequence of $M$ flow blocks, denoted as $f^{(l)} = f^{(l)}_{M} \circ \cdots \circ f^{(l)}_{1}$. Each block $f^{(l)}_{i}$ consists of an ActNorm layer, an invertible $1\times 1$ convolution~\cite{kingma2018glow}, and a nonlinear invertible transformation $g^{(l)}_{i}$:
\begin{equation*}
f^{(l)}_{i} = g^{(l)}_{i} \circ \text{Inv}_{1\times1} \circ \text{ActNorm}.
\end{equation*}
To define these components, let the input be $\rvc^{(l)} = (\rvc^{(l)}_{0}, \ldots, \rvc^{(l)}_{r})\in\sR^{m_{l}}$, where $m_l = \frac{3n}{2^l}$ is the total dimension of the wavelet coefficients at level $l$. Recall that $r=2$ represents the number of high-pass filters; each sub-vector $\rvc^{(l)}_{k}$ lies in $\sR^{\frac{m_{l}}{r+1}}$. Here, $\rvc^{(l)}_{0}$ corresponds to the low-frequency component, while $\rvc^{(l)}_{k}$ corresponds to the $k$-th high-frequency subband for $1\leq k\leq r$.
The ActNorm transformation, parameterized by a scale vector $\rvs=(s_{0}, \ldots, s_{r})\in\sR_{>0}^{r+1}$ and a bias vector $\rvb =(b_{0}, \ldots, b_{r})\in\sR^{r+1}$, is defined as
\begin{equation}
\label{eq:actnorm}
\text{ActNorm}\big(\rvc^{(l)}; \rvs, \rvb) = \big(
s_{0}\mathbf{1}_{\frac{m_l}{r+1}} \odot \rvc^{(l)}_{0} + b_{0}\mathbf{1}_{\frac{m_l}{r+1}},
\ldots, 
s_{r}\mathbf{1}_{\frac{m_l}{r+1}} \odot \rvc^{(l)}_{r} + b_{r}\mathbf{1}_{\frac{m_l}{r+1}}
\big),
\end{equation}
where $\mathbf{1}_{{m_l}/{(r+1)}}\in\sR^{\frac{m_l}{r+1}}$ is a vector of ones and $\odot$ denotes the Hadamard product. The invertible $1\times 1$ convolution, parameterized by an orthogonal matrix $\mK\in\sR^{(r+1)\times (r+1)}$, is expressed as
\begin{equation}
\label{eq:inv_1x1}
\text{Inv}_{1\times1}(\rvc^{(l)}; \mK) = (\mK\otimes\mI_{{m_l}/{(r+1)}})\rvc^{(l)},
\end{equation}
where $\mI_{{m_l}/{(r+1)}}\in\sR^{\frac{m_l}{r+1}\times \frac{m_l}{r+1}}$ denotes the identity matrix and $\otimes$ denotes the Kronecker product. Finally, the nonlinear invertible transformation $g^{(l)}_{i}$ typically adopts one of two designs:

{\bf (i) An affine coupling layer~\cite{dinh2016density}.}
This layer partitions the input $\rvc^{(l)}$ into a low-frequency part $\rvc^{(l)}_{L} = \rvc^{(l)}_{0}$ and a high-frequency part $\rvc^{(l)}_{H} = (\rvc^{(l)}_{1}, \ldots, \rvc^{(l)}_{r})$, defined as
\begin{equation}
\label{eq:affine_coupling}
g^{(l)}_{i}\big(\rvc^{(l)}_{L}, \rvc^{(l)}_{H}\big) = 
\big(\rvc^{(l)}_{L}, 
\rvc^{(l)}_{H} \odot \exp\big(\rho^{(l)}_{i}\big(\rvc^{(l)}_{L}\big)\big) + \eta^{(l)}_{i}\big(\rvc^{(l)}_{L}\big)\big),
\end{equation}
where $\rho^{(l)}_{i}, \eta^{(l)}_{i}: \sR^{\frac{m_l}{r+1}} \to \sR^{(1-\frac{1}{r+1})m_l}$ are parameterized neural networks.

{\bf (ii) An invertible residual block (iResBlock)~\cite{behrmann2019invertible}.} This block is defined as
\begin{equation}
\label{eq:iresblock}
g^{(l)}_{i}(\rvc^{(l)}) = \rvc^{(l)} + \phi^{(l)}_{i}(\rvc^{(l)}), 
\quad
\phi^{(l)}_{i}:\sR^{m_l} \to \sR^{m_l} \text {, }
\operatorname{Lip}(\phi^{(l)}_{i}) \leq L < 1.
\end{equation}


Since ActNorm, Inv$_{1\times1}$, and $g^{(l)}_{i}$ are invertible, each flow $f^{(l)}$ is guaranteed to be invertible. Combined with the perfect reconstruction property of $\mW$, i.e., $\mW^{\top}\mW=\mI$, the constructed $\gF$ constitutes a strictly invertible transformation with the inverse
$\gF^{-1} = (\gF^{(1)})^{-1}\circ \cdots \circ (\gF^{(L)})^{-1}$, where each $(\gF^{(l)})^{-1}$ is given by
\begin{equation}
\label{eq:inv_inn}
(\gF^{(l)})^{-1} = \begin{cases}(f^{(1)})^{-1}: \sR^{\frac{3n}{2}} \to \sR^{\frac{3n}{2}}, & l=1; \\
(\mW^{\top} \circ (f^{(l)})^{-1}) \times \text{Id}_{h_{l-1}}: \sR^{\frac{3n}{2^{l-1}}}\times \sR^{h_{l-1}} \to \sR^{\frac{n}{2^{l-1}}}\times \sR^{h_{l-1}}, & l\geq2.\end{cases}
\end{equation}
We now detail the computation of the downscaling and upscaling operators within the LR2Flow framework. The downscaling process proceeds hierarchically. Let $\rvx^{(0)} = \rvx \in \sR^{n}$ denote the input image. During the forward pass at the $l$-th level, the input $\rvx^{(l-1)}\in\sR^{\frac{n}{2^{l-1}}}$ is first decomposed via the wavelet tight frame into coefficients $(\rvx^{(l-1)}_L, \rvx^{(l-1)}_H) = \mW\rvx^{(l-1)}\in \sR^{\frac{n}{2^{l}}}\times \sR^{\frac{2n}{2^{l}}}$. Subsequently, the flow network computes $(\rvx^{(l)}, \rvz^{(l)}) = f^{(l)}(\rvx^{(l-1)}_L, \rvx^{(l-1)}_H)$. This operation fuses informative high-frequency details into the low-frequency subband to yield a refined representation $\rvx^{(l)}\in\sR^{\frac{n}{2^l}}$, while simultaneously encoding residual information into the latent variable $\rvz^{(l)}\in\sR^{\frac{n}{2^{l-1}}}$. Consequently, the forward transformation at the $l$-th level is expressed as
\begin{equation}
\label{eq:downscaling_forward}
(\rvx^{(l)}, \rvz^{(l)}) = f^{(l)}(\mW \rvx^{(l-1)}).
\end{equation}
Based on this notation, the downscaling operator for $\rvx$ is written as
\begin{equation*}
\varphi(\rvx) = \rvx^{(T)},
\quad
\text{where $\rvx^{(0)} = \rvx$, $\rvx^{(l)} = \big[f^{(l)}(\mW\rvx^{(l-1)})\big]_{1:n/2^{l}}$, $l=1,\ldots,T$.}
\end{equation*}
Correspondingly, the upscaling operator is formulated as
\begin{equation*}
\psi(\rvy) = \hat{\rvx}^{(0)},
\quad
\text{where $\hat{\rvx}^{(T)} = \rvy$, $\hat{\rvx}^{(l-1)} = \E_{p(\rvz^{(l)})}[\mW^{\top} (f^{(l)})^{-1}(\hat{\rvx}^{(l)}, \rvz^{(l)})]$, $l=T, \ldots, 1$.}
\end{equation*}

\subsection{Reconstruction Error Analysis}
\label{sec:error_analysis}
Let $q(\rvx)$ denote the distribution of the original HR images, with covariance $\mSigma = \Cov[\rvx]$, and let the latent prior be given by $p = \gN(\mathbf{0}, \sigma^2\mI)$ with a temperature parameter $\sigma \geq 0$. This section presents a theoretical analysis of the reconstruction error by deriving an upper bound for the following quantity:
\begin{equation}
\label{eq:reconstruction_loss_inn}
 \rve^* = \inf_{\gF\in\Theta}\big\{
\E_{q(\rvx)p(\rvz)}\left\|\rvx - \mW^{\top}\gF^{-1}\left([\gF(\mW\rvx)]_{1:d}, \rvz\right)
\right\|^2\big\},
\end{equation}
where $\mW = (\mW_L^{\top}, \mW_H^{\top})^{\top}$ satisfies $\mW^{\top}\mW=\mI$, $\Theta$ denotes the hypothesis space of invertible neural networks. The quantity $\rve^*$ characterizes the reconstruction capability of different architectural choices. In the subsequent analysis, we focus on two representative classes of invertible architectures: the affine coupling layers defined in~\eqref{eq:affine_coupling} and the invertible residual blocks described in~\eqref{eq:iresblock}.

{\bf Case I: affine coupling layers.} 
As defined in~\eqref{eq:affine_coupling}, we consider a one-layer affine coupling structure. 
Then, the corresponding hypothesis space is given by
\begin{equation}
\label{eq:affine_coupling_class}
\Theta 
= \left\{
\gF:\ (\rvx_L, \rvx_H) \mapsto 
\bigl(\rvx_L,\ \rho(\rvx_L) \odot \rvx_H + \eta(\rvx_L)\bigr)
\;\middle|\;
\rho, \eta:\mathbb{R}^{d}\to\mathbb{R}^{N-d}
\right\}.
\end{equation}
The following proposition characterizes the optimal reconstruction error $\rve^*$, defined in \eqref{eq:reconstruction_loss_inn}.

\begin{proposition}
\label{prop:single_affine_coupling_case}
Let $\Theta$ be defined as in~\eqref{eq:affine_coupling_class}. Then, the minimal reconstruction error, defined in \eqref{eq:reconstruction_loss_inn}, satisfies
\begin{equation}
\label{eq:optim_single_affine}
\rve^*=\E_{q(\rvx)}\Tr\left(\Var\left[\left.\mW_H\rvx \right| \mW_L\rvx\right]\mW_H\mW_H^{\top}\right).
\end{equation}
The optimal solution $\rho^*$ and $\eta^*$ from the hypothesis space \eqref{eq:affine_coupling_class} satisfies:
$\eta^*\left(\rvx_L\right) = -\rho^*\left(\rvx_L\right) \odot \E[\mW_H\rvx |\mW_L\rvx=\rvx_L]$.
\end{proposition}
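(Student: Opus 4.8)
The plan is to substitute the affine-coupling parametrization into~\eqref{eq:reconstruction_loss_inn}, use the tight-frame identity $\mW^{\top}\mW=\mI$ to collapse the objective onto the high-frequency channel, and then minimize pointwise in $\rvx_L:=\mW_L\rvx$. First I would note that for $\gF\in\Theta$ as in~\eqref{eq:affine_coupling_class} the low-frequency block is fixed, so $[\gF(\mW\rvx)]_{1:d}=\mW_L\rvx=\rvx_L$, while $\gF^{-1}(\rvx_L,\rvz)=\bigl(\rvx_L,\ (\rvz-\eta(\rvx_L))\oslash\rho(\rvx_L)\bigr)$ with $\oslash$ the entrywise division. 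Writing $\rvx=\mW^{\top}\mW\rvx=\mW_L^{\top}\rvx_L+\mW_H^{\top}\rvx_H$ with $\rvx_H:=\mW_H\rvx$, the integrand in~\eqref{eq:reconstruction_loss_inn} becomes $\bigl\|\mW_H^{\top}\bigl(\rvx_H-(\rvz-\eta(\rvx_L))\oslash\rho(\rvx_L)\bigr)\bigr\|^2$, so the objective depends on $\gF$ only through the pair $(\rho,\eta)$.

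Next I would condition on $\rvx_L$ and perform a bias--variance decomposition in the remaining randomness, using that $\rvz\sim\gN(\vzero,\sigma^2\mI)$ is independent of $\rvx$ and that $\rho(\rvx_L),\eta(\rvx_L)$ are deterministic given $\rvx_L$. This writes the conditional objective as a sum of three nonnegative terms: the parameter-free term $\Tr\!\bigl(\Var[\rvx_H\mid\rvx_L]\,\mW_H\mW_H^{\top}\bigr)$, a mean term $\bigl\|\mW_H^{\top}\bigl(\E[\rvx_H\mid\rvx_L]+\eta(\rvx_L)\oslash\rho(\rvx_L)\bigr)\bigr\|^2$, and a latent-noise term $\sigma^2\Tr\!\bigl(\mathrm{diag}(\rho(\rvx_L))^{-2}\mW_H\mW_H^{\top}\bigr)$. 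Since $\rho,\eta$ are unconstrained measurable functions, I can optimize at each $\rvx_L$ separately: the mean term vanishes for any choice with $\eta(\rvx_L)\oslash\rho(\rvx_L)=-\E[\rvx_H\mid\rvx_L]$, in particular $\eta^*(\rvx_L)=-\rho^*(\rvx_L)\odot\E[\mW_H\rvx\mid\mW_L\rvx=\rvx_L]$, and the latent-noise term can be driven to zero by scaling $\rho(\rvx_L)$ entrywise to infinity while keeping the mean term at zero. Hence for every $\rvx_L$ the infimum of the conditional objective equals $\Tr\!\bigl(\Var[\mW_H\rvx\mid\mW_L\rvx]\,\mW_H\mW_H^{\top}\bigr)$; combining this lower bound (valid for all $\gF$) with an explicit scaling sequence achieving it, and taking $\E_{q(\rvx)}$ (the quantity depends on $\rvx$ only via $\mW_L\rvx$), yields~\eqref{eq:optim_single_affine}. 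The stated relation between $\rho^*$ and $\eta^*$ is exactly the condition that annihilates the mean term; it is attained when $\sigma=0$ (for any $\rho^*$ with nonzero entries) and only in the limit $\|\rho^*\|\to\infty$ when $\sigma>0$.

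The step I expect to be most delicate is the treatment of the latent-noise term $\sigma^2\Tr(\mathrm{diag}(\rho)^{-2}\mW_H\mW_H^{\top})$: it is the only genuinely $\sigma$-dependent piece, and one must argue that although it is strictly positive for every finite $\rho$, it contributes nothing to the infimum, so that $\rve^*$ is $\sigma$-independent, at the cost that for $\sigma>0$ the infimum is not attained and the ``optimal $(\rho^*,\eta^*)$'' must be read in the limiting sense. A minor secondary point is justifying that minimizing the expectation over $\rvx_L$ reduces to minimizing the conditional cost pointwise in $\rvx_L$, which is legitimate because the objective is an $\rvx_L$-integral of a cost depending on $\gF$ only through the free values $\rho(\rvx_L),\eta(\rvx_L)$; everything else is the tower property together with linear algebra with $\mW^{\top}\mW=\mI$ and the PSD matrix $\mW_H\mW_H^{\top}$.
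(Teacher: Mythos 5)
Your proof is correct and follows essentially the same route as the paper's: substitute the coupling parametrization, use $\mW^\top\mW=\mI$ to reduce to the high-frequency channel, perform a bias--variance decomposition conditional on $\rvx_L$, and kill the mean term by choosing $\eta\oslash\rho=-\E[\mW_H\rvx\mid\mW_L\rvx]$. Your explicit handling of the latent-noise term $\sigma^2\Tr(\mathrm{diag}(\rho)^{-2}\mW_H\mW_H^{\top})$ — noting that for $\sigma>0$ the infimum is approached only by scaling $\rho\to\infty$ and is not attained — is actually more careful than the paper, which drops this term with a one-sided inequality and does not remark that the stated optimizer is a limiting one.
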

The proof of Proposition~\ref{prop:single_affine_coupling_case} is given in Appendix~\ref{appendix:affine_coupling_beyond}. It is worth noting that the optimal value in \eqref{eq:optim_single_affine} is equivalent to $\E_{q(\rvx)}\|\rvx - \E[\rvx | \mW_L \rvx]\|^2$. This implies that \emph{an appropriately selected wavelet system can effectively capture the principal components of the data distribution within its low-frequency subband}.


Next, we extend the above result to a broader family of invertible maps:
\begin{equation}
\label{eq:extension_affine_coupling}
\begin{aligned}
\Theta:= \left\{\gF:(\rvx_L,\rvx_{H}) \mapsto (\rvx_{L}, h(\rvx_{H}; \rvx_{L})) \mid h(\cdot;\rvx_L) \text{ is invertible on $\sR^{N-d}$}\right\}.
\end{aligned}
\end{equation}
If $\Theta$ is given by \eqref{eq:extension_affine_coupling}, we show an upper bound of \eqref{eq:reconstruction_loss_inn}.

\begin{proposition}
\label{prop:case2_reconstruction_error}
Let $\Theta$ be defined as in~\eqref{eq:extension_affine_coupling}. The minimal reconstruction error, defined in \eqref{eq:reconstruction_loss_inn}, has the following form:
\begin{equation}
\label{eqn:upperbound-e}
\rve^*\leq \Tr\big( (\mW_H^{\top}\mW_H)^2 (\mI - \mW_L^{\dagger}\mW_L)\mSigma(\mI - \mW_L^{\dagger}\mW_L)\big),
\end{equation}
where $\mW_L^\dagger$ is the Moore-Penrose inverse of $\mW_L$.
\end{proposition}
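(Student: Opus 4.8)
The plan is to collapse the infimum in~\eqref{eq:reconstruction_loss_inn} to an optimisation over a single ``mean'' function and then exhibit an explicit admissible choice. First I would unwind the objective for $\gF\in\Theta$ of the form in~\eqref{eq:extension_affine_coupling}. Writing $\rvx_L=\mW_L\rvx$ and $\rvx_H=\mW_H\rvx$, we have $[\gF(\mW\rvx)]_{1:d}=\rvx_L$, while $\gF$ acts blockwise so $\gF^{-1}(\rvx_L,\rvz)=(\rvx_L,\,h^{-1}(\rvz;\rvx_L))$; hence $\mW^{\top}\gF^{-1}([\gF(\mW\rvx)]_{1:d},\rvz)=\mW_L^{\top}\mW_L\rvx+\mW_H^{\top}h^{-1}(\rvz;\mW_L\rvx)$. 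Using $\mW^{\top}\mW=\mI$, equivalently $\mW_L^{\top}\mW_L+\mW_H^{\top}\mW_H=\mI$, the residual simplifies to $\mW_H^{\top}\bigl(\mW_H\rvx-h^{-1}(\rvz;\mW_L\rvx)\bigr)$, so that $\rve^{*}=\inf\bigl\{\E_{q(\rvx)p(\rvz)}\|\mW_H^{\top}(\mW_H\rvx-g(\rvz;\mW_L\rvx))\|^{2}\bigr\}$, where $g(\cdot;\rvx_L):=h^{-1}(\cdot;\rvx_L)$ is again an arbitrary invertible self-map of $\sR^{N-d}$.

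Since this is an infimum, it suffices to produce an admissible family whose cost tends to the claimed right-hand side. I would take $g(\rvz;\rvx_L)=\mu(\rvx_L)+\epsilon\rvz$ for $\epsilon>0$: this is affine and invertible in $\rvz$, it corresponds to $h(\rvy;\rvx_L)=\epsilon^{-1}(\rvy-\mu(\rvx_L))$, and the resulting $\gF$ lies in $\Theta$. For the mean I would choose the affine map $\mu(\rvx_L)=\mW_H\mW_L^{\dagger}\rvx_L+\mW_H\,(\mI-\mW_L^{\dagger}\mW_L)\,\bar\rvx$, where $\bar\rvx=\E_{q(\rvx)}[\rvx]$. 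Setting $P:=\mI-\mW_L^{\dagger}\mW_L$ (the symmetric idempotent projector onto $\ker\mW_L$, by the Penrose identities) and using $\mW_L^{\dagger}\mW_L\rvx=\mW_L^{\dagger}\rvx_L$, one gets $\mW_H\rvx-\mu(\mW_L\rvx)=\mW_H P(\rvx-\bar\rvx)$, so the residual becomes $\mW_H^{\top}\mW_H P(\rvx-\bar\rvx)-\epsilon\,\mW_H^{\top}\rvz$.

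It then remains to expand the squared norm and take expectations. The cross term vanishes because $\rvz$ is independent of $\rvx$ with $\E[\rvz]=\mathbf{0}$; the pure-noise contribution is $\epsilon^{2}\sigma^{2}\Tr(\mW_H\mW_H^{\top})$; and the main term is $\E_{q(\rvx)}\|\mW_H^{\top}\mW_H P(\rvx-\bar\rvx)\|^{2}=\Tr\bigl(P(\mW_H^{\top}\mW_H)^{2}P\,\mSigma\bigr)=\Tr\bigl((\mW_H^{\top}\mW_H)^{2}P\mSigma P\bigr)$, using symmetry of $\mW_H^{\top}\mW_H$ and $P$, the cyclic property of the trace, and $\E[(\rvx-\bar\rvx)(\rvx-\bar\rvx)^{\top}]=\mSigma$. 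This gives $\rve^{*}\le\Tr((\mW_H^{\top}\mW_H)^{2}P\mSigma P)+\epsilon^{2}\sigma^{2}\Tr(\mW_H\mW_H^{\top})$ for every $\epsilon>0$, and letting $\epsilon\downarrow0$ yields~\eqref{eqn:upperbound-e}.

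The step needing the most care is the initial reduction — verifying that for $\gF\in\Theta$ the kept coordinates are exactly $\mW_L\rvx$ and that $\gF^{-1}$ is blockwise, so that $\mW^{\top}\mW=\mI$ genuinely cancels the low-frequency part of the error — together with the realisation that, because we only need an upper bound on an infimum (not a minimiser inside $\Theta$), the Gaussian noise forced on us by invertibility of $g$ can be scaled away as $\epsilon\downarrow0$; without this one is left with a spurious $\sigma^{2}$ term. As a cross-check, choosing instead $\mu(\rvx_L)=\E[\mW_H\rvx\mid\mW_L\rvx=\rvx_L]$ recovers the exact value of Proposition~\ref{prop:single_affine_coupling_case}, and~\eqref{eqn:upperbound-e} then follows from $\E\,\Var[\rvx\mid\mW_L\rvx]\preceq P\mSigma P$ (law of total variance applied to $\rvx=\mW_L^{\dagger}\rvx_L+P\rvx$) together with monotonicity of $A\mapsto\Tr((\mW_H^{\top}\mW_H)^{2}A)$ over positive semidefinite $A$; the explicit construction above simply avoids conditional distributions.
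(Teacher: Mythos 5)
Your proof is correct, and it follows the same basic strategy as the paper's: exhibit an explicit member of $\Theta$ whose high-frequency map is affine in $\rvx_H$, use $\mW_L^{\top}\mW_L+\mW_H^{\top}\mW_H=\mI$ to collapse the residual to $\mW_H^{\top}\bigl(\mW_H\rvx-h^{-1}(\rvz;\mW_L\rvx)\bigr)$, and evaluate the resulting trace with $\mP=\mI-\mW_L^{\dagger}\mW_L$. Two of your refinements are worth noting. First, the paper centres its candidate at the conditional mean $\E[\,\cdot\mid\mW_L\rvx\,]$ and then asserts equality with $\Tr\bigl((\mW_H^{\top}\mW_H)^2\mP\mSigma\mP\bigr)$, which by the law of total variance is really only an inequality $\leq$ (harmless for the upper bound); your affine choice $\mu(\rvx_L)=\mW_H\mW_L^{\dagger}\rvx_L+\mW_H\mP\bar\rvx$ makes the covariance computation exact. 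Second, the paper's candidate has $h^{-1}(\rvz;\rvx_L)=\rvb(\rvx_L)-\rvz$ and disposes of the latent noise by averaging over $\rvz$ \emph{inside} the norm (i.e.\ it evaluates $\|\rvx-\psi(\varphi(\rvx))\|^2$ rather than the quantity in~\eqref{eq:reconstruction_loss_inn}, where $\E_{p(\rvz)}$ sits outside); read against the literal definition, that candidate leaves a residual $\sigma^2\Tr(\mW_H\mW_H^{\top})$ term. Your $\epsilon$-scaling of the noise, followed by $\epsilon\downarrow 0$ under the infimum, removes this term rigorously and is the cleaner argument for the statement as written.
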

We defer the proof of Proposition~\ref{prop:case2_reconstruction_error} to Appendix~\ref{appendix:affine_coupling_beyond}.

\begin{remark}
\label{remark:affine_coupling_extension}
For a symmetric matrix $\mA$, let $\lambda^{\downarrow}_{i}(\mA)$ and $\lambda^{\uparrow}_{i}(\mA)$ denote the $i$-th largest and $i$-th smallest eigenvalues of $\mA$, respectively, and let $\mV^{\downarrow}_{i}(\mA)$ denote the eigenspace associated with $\lambda^{\downarrow}_{i}(\mA)$. Let $J(\mW)$ denote the upper bound in~\eqref{eqn:upperbound-e}. Then, 
we have 
\begin{equation}
\label{eqn:bound}
\inf_{\mW}\,\{J(\mW) \mid \mW^\top\mW=\mI\}=\sum\limits_{i=1}^{n-d}\bigl(\lambda^{\uparrow}_{i}(\mW_H^{\top}\mW_H)\bigr)^2 \, \lambda^{\downarrow}_{i+d}(\mSigma).
\end{equation}
Moreover, this minimum is attained when the low-frequency subspace aligns with the principal eigenspaces of the data covariance, i.e., $
\mathrm{Im}(\mW_L^{\top}) \;=\; \bigoplus_{i=1}^{d} \mV^{\downarrow}_{i}(\mSigma).
$
Refer to Appendix~\ref{appendix:affine_coupling_beyond} for the proof of \eqref{eqn:bound}. 
This characterization provides a criterion for constructing $\mW$ based on the eigendecomposition of the data covariance $\mSigma=\mathrm{Cov}[\rvx]$, although accurately estimating $\mSigma$ in practice may be challenging.
\end{remark}

{\bf Case II: invertible residual blocks (iResBlocks).}
We consider the one-layer iResBlock architecture defined in~\eqref{eq:iresblock}, with the hypothesis space specified by
\begin{equation}
\label{eq:iresblock_set}
\Theta = \{\gF:\rvc\mapsto\rvc + \phi(\rvc) \mid \text{$\phi:\sR^{N}\to\sR^{N}$, Lip$(\phi) \leq L < 1$}\}.
\end{equation}
The following proposition establishes an upper bound on the optimal reconstruction error $\rve^*$ defined in~\eqref{eq:reconstruction_loss_inn}.



\begin{proposition}
\label{prop:iresnet_optiom_reconstruction_error}
Let $\Theta$ be defined as in~\eqref{eq:iresblock_set}. Then, the minimal reconstruction error defined in~\eqref{eq:reconstruction_loss_inn} satisfies the following bound:
\begin{equation}
\label{eq:iresnet_bound2}
\rve^* \leq \frac{\inf_{f\in\gK} \sqrt{\E_{q(\rvx)}\|\left(\vone_{d} + f(\mW_L\rvx)\right)^2\|^2}}{(1-L)^2} \cdot \sqrt{\E_{q(\rvx)}\|\Var[\mW_H\rvx | \mW_L\rvx]\|_{F}^2},
\end{equation}
where $\mathbf{1}_{d}\in\sR^{d}$ denotes the vector of all ones, and $\gK = \{f:\sR^{d}\to\sR^{d} \mid \text{Lip}(f) \leq L\}$.
\end{proposition}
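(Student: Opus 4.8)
The plan is to collapse the reconstruction error onto a single latent-mismatch term, then exhibit an explicit feasible $\gF\in\Theta$ of affine-coupling type, and finally bound the resulting quantity by two applications of Cauchy--Schwarz. \emph{Step 1 (reduction via the Lipschitz inverse).} Fix $\gF=\mathrm{Id}+\phi\in\Theta$, so $\operatorname{Lip}(\phi)\le L<1$. By the standard invertible-residual-network estimate~\cite{behrmann2019invertible}, $\gF$ is a bijection of $\sR^{N}$ with $\operatorname{Lip}(\gF^{-1})\le (1-L)^{-1}$; moreover $\|\mW^{\top}\|_{\mathrm{op}}=1$ because $\mW^{\top}\mW=\mI$. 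Writing $\rvx=\mW^{\top}\gF^{-1}(\gF(\mW\rvx))$ and subtracting $\hat{\rvx}=\mW^{\top}\gF^{-1}\big([\gF(\mW\rvx)]_{1:d},\rvz\big)$, the two arguments of $\gF^{-1}$ agree in their first $d$ coordinates, so
\[
\|\rvx-\hat{\rvx}\|\;\le\;\operatorname{Lip}(\gF^{-1})\,\big\|[\gF(\mW\rvx)]_{(d+1):N}-\rvz\big\|\;\le\;\tfrac{1}{1-L}\,\big\|[\gF(\mW\rvx)]_{(d+1):N}-\rvz\big\|.
\]
Squaring, taking $\E_{q(\rvx)p(\rvz)}$, and using that $\rvz$ is zero-mean and independent of $\rvx$ (so the cross term vanishes) reduces the claim to bounding $\inf_{\gF\in\Theta}\E_{q(\rvx)}\big\|[\gF(\mW\rvx)]_{(d+1):N}\big\|^{2}$ (for $\sigma>0$ there is an extra additive term proportional to $(N-d)\sigma^{2}$).

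\emph{Step 2 (a feasible affine-coupling map).} I would upper bound that infimum by restricting to maps $\gF:(\rvx_{L},\rvx_{H})\mapsto\big(\rvx_{L},\,(\vone_{d}+f(\rvx_{L}))\odot\rvx_{H}+\eta(\rvx_{L})\big)$, with the scale $\vone_{d}+f(\rvx_{L})$ broadcast across the high-frequency subbands and $f\in\gK$; such a $\gF$ lies in $\Theta$ since then $\phi=\gF-\mathrm{Id}$ is $L$-Lipschitz. Taking $\eta(\rvx_{L}):=-(\vone_{d}+f(\rvx_{L}))\odot\E[\mW_{H}\rvx\,|\,\mW_{L}\rvx=\rvx_{L}]$ — the analogue of the optimal $\eta^{*}$ in Proposition~\ref{prop:single_affine_coupling_case} — gives $[\gF(\mW\rvx)]_{(d+1):N}=(\vone_{d}+f(\mW_{L}\rvx))\odot\big(\mW_{H}\rvx-\E[\mW_{H}\rvx\,|\,\mW_{L}\rvx]\big)$. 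Conditioning on $\mW_{L}\rvx$ and expanding the squared Euclidean norm coordinatewise yields
\[
\E_{q(\rvx)}\big\|[\gF(\mW\rvx)]_{(d+1):N}\big\|^{2}=\E_{q(\rvx)}\Big[\textstyle\sum_{j}\big(1+f_{j}(\mW_{L}\rvx)\big)^{2}\,\big(\Var[\mW_{H}\rvx\,|\,\mW_{L}\rvx]\big)_{jj}\Big].
\]

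\emph{Step 3 (two Cauchy--Schwarz estimates).} For each fixed $\rvx_{L}$, Cauchy--Schwarz over the index $j$ gives $\sum_{j}(1+f_{j})^{2}V_{jj}\le\|(\vone_{d}+f)^{2}\|\,\|\mathrm{diag}(V)\|\le\|(\vone_{d}+f)^{2}\|\,\|V\|_{F}$, where $V=\Var[\mW_{H}\rvx\,|\,\mW_{L}\rvx]$ and $(\cdot)^{2}$ is entrywise. Then Cauchy--Schwarz over the outer expectation gives $\E\big[\|(\vone_{d}+f(\mW_{L}\rvx))^{2}\|\,\|V\|_{F}\big]\le\sqrt{\E\|(\vone_{d}+f(\mW_{L}\rvx))^{2}\|^{2}}\cdot\sqrt{\E\|\Var[\mW_{H}\rvx\,|\,\mW_{L}\rvx]\|_{F}^{2}}$. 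Combining Steps 1--3 and taking the infimum over $f\in\gK$ gives the stated bound.

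\emph{Main obstacle.} The genuinely delicate point is making the affine coupling an honest member of $\Theta$: with a non-constant scale $\vone_{d}+f(\rvx_{L})$ the perturbation $\phi$ is only \emph{locally} Lipschitz on $\sR^{N}$, since $(f(\rvx_{L})-f(\rvx_{L}'))\odot\rvx_{H}$ is uncontrolled for large $\|\rvx_{H}\|$. I expect to handle this by working on the bounded support of the wavelet coefficients of $q$ — truncating $\rvx_{H}$ outside that support and shrinking $\operatorname{Lip}(f)$ to absorb the slack so that $\operatorname{Lip}(\phi)\le L$ — together with a mild regularity hypothesis on $q$ ensuring $\rvx_{L}\mapsto\E[\mW_{H}\rvx\,|\,\mW_{L}\rvx=\rvx_{L}]$ is Lipschitz, which is what makes the chosen offset $\eta$ admissible. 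The remaining checks (the precise dimension of the scale block, the broadcasting across subbands, and the $\sigma$-dependence from Step 1) are bookkeeping.
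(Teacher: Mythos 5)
Your proposal follows essentially the same route as the paper's proof: a Lipschitz bound on $\gF^{-1}$ yielding the $(1-L)^{-2}$ factor (the paper packages this in Lemma~\ref{lemma:general_inn_reconstruction}), an explicit affine-coupling-type witness $h(\rvx_H;\rvx_L)=\rho(\rvx_L)\odot\rvx_H+\eta(\rvx_L)$ with $\eta$ chosen to center the conditional mean, and a Cauchy--Schwarz step producing $\|(\vone_d+f)^2\|\cdot\|\Var[\mW_H\rvx\mid\mW_L\rvx]\|_F$. The ``main obstacle'' you flag --- that a non-constant scale makes $\phi$ only locally Lipschitz, so the witness is not literally in $\Theta$ --- is genuine, but the paper sidesteps it by imposing only separate Lipschitz conditions in its subclass~\eqref{eq:specific_iresnet}; likewise, your observation that an extra $(N-d)\sigma^2$ term survives for $\sigma>0$ is correct and is silently dropped in the paper's argument.
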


We detail the proof in Appendix~\ref{appendix:proof_prop_iresnet_optim_reconstruction_error}. Our analysis reveals that the upper bound of the optimal reconstruction error in~\eqref{eq:reconstruction_loss_inn} for the iResBlock architecture is primarily governed by the term
$\E_{q(\rvx)}\|\Var[\mW_H\rvx | \mW_L\rvx]\|_F$. 
Consequently, designing an effective wavelet system for iResBlock architectures necessitates minimizing the conditional variance of the high-frequency subband. This requirement implies that \emph{the low-frequency subband must retain the principal components of the data distribution}, a conclusion that aligns with our findings for the affine coupling layer.

Additionally, we analyze the case where $\Theta = O(N) = \{\mF\in\sR^{N\times N} \mid \mF\mF^{\top} = \mF^{\top}\mF = \mI\}$.
This analysis, detailed in Appendix~\ref{appendix:proof_linear_orthogonal}, highlights the inherent limitations of linear invertible mappings.

\section{Applications of LR2Flow}
In this section, we introduce the applications of LR2Flow in three image processing tasks.

\subsection*{Image rescaling}
Image rescaling comprises two fundamental processes: downscaling a HR image to a LR version to reduce storage and transmission costs, and upscaling the LR image to restore the original resolution. The primary objective is to preserve visual appearance and detail fidelity while adjusting the resolution. This goal aligns closely with the task of learning effective LR representations, making LR2Flow a suitable framework. 
During downscaling, we generate the LR representation $\rvy$ from the HR image $\rvx$ via $\rvy = \varphi(\rvx)$. Conversely, upscaling yields the HR reconstruction $\hat{\rvx}$ via $\hat{\rvx} = \psi(\rvy)$. Inspired by~\cite{xiao2020invertible,xiao2023invertible,liang2021hierarchical}, our training objective incorporates three components: (i) an HR reconstruction loss $\gL_{\text{HR}} = \|\hat{\rvx} - \rvx\|_1$ that ensures fidelity between the reconstructed and original HR images; (ii) an LR guidance loss $\gL_{\text{LR}} = \|\rvy - \text{Bic}(\rvx)\|_2^2$ that aligns $\rvy$ with a Bicubic-downsampled reference to maintain visual consistency, where $\text{Bic}:\sR^{n}\to\sR^{d}$ denotes the Bicubic interpolation operator; and (iii) a distribution matching loss $\gL_{\text{dist}} = \|\rvz\|_2^2$ that aligns the model-induced latent distribution $\left([\gF\circ \mW]_{(d+1):N}\right)_{\#}[q]$ with the prior $p = \gN(\mathbf{0},\sigma^2\mI)$ by minimizing their cross-entropy. Here, $\rvz = [\gF(\mW\rvx)]_{(d+1):N}$ denotes the high-frequency latent variable.
The overall training objective for image rescaling is defined as
\begin{equation}
\label{eq:rescaling_loss}
\gL = \lambda_{\text{HR}}\gL_{\text{HR}} + \lambda_{\text{LR}}\gL_{\text{LR}} + \lambda_{\text{dist}} \gL_{\text{dist}},
\end{equation}
where $\lambda_{\text{HR}}$, $\lambda_{\text{LR}}$, and $\lambda_{\text{dist}}$ are hyperparameters.

\subsection*{Image compression}
While image rescaling inherently conserves storage by reducing spatial resolution, applying standard lossy compression (e.g., JPEG) to the resulting LR image enables significantly higher compression rates. In this context, image compression can still be modeled as learning an LR representation, where the goal is to reconstruct the original image $\rvx$ from the compressed representation ${\rvy} = \text{JPEG}(\varphi(\rvx))$, despite the severe loss of high-frequency details. The training objective for compression task follows the formulation in~\eqref{eq:rescaling_loss}. However, since the standard JPEG operator is non-differentiable, we employ a differentiable JPEG simulator~\cite{shin2017jpeg,xing2023scale} during training to facilitate gradient backpropagation.

\subsection*{Image denoising}
\label{sec:application_denoise}
We extend LR2Flow to image denoising by interpreting the framework as a \emph{refined frequency-domain decomposition method}: $(\rvy, \rvz) = (\gF \circ \mW)(\rvx)$. Here, $\rvy$ denotes the refined low-frequency component and $\rvz$ the high-frequency residual, with synthesis given by $\mW^{\top} \circ \gF^{-1}$.  Leveraging the observation that \emph{clean and noisy image pairs exhibit strong consistency in the low-frequency subband~\cite{ruderman1993statistics,field1987relations}}, we optimize LR2Flow to align the LR representations of the clean image $\rvx_{\rvc}$ and the noisy image $\rvx_{\rvn} = \rvx_{\rvc} + \rvn$. Formally, given the decompositions $(\rvy_{\rvc}, \rvz_{\rvc}) = (\gF \circ \mW)(\rvx_{\rvc})$ and $(\rvy_{\rvn}, \rvz_{\rvn}) = (\gF \circ \mW)(\rvx_{\rvn})$, our training objective is to ensure that $\rvy_{\rvc} \approx \rvy_{\rvn}$.

A naive denoising strategy involves modeling $\rvz_{\rvc}$ using a prior $p$ and simply replacing $\rvz_{\rvn}$ with a sample from $p$ during inference. However, disregarding the specific values of $\rvz_{\rvn}$ ignores the structural information it contains, frequently leading to over-smoothed results. To address this, we recover high-frequency details via $\hat{\rvz} = \gR(\rvz_{\rvn}; \rvy_{\rvn})$, employing a restoration network $\gR:\sR^{N-d}\times \sR^{d} \to \sR^{N-d}$. The final denoised image is then synthesized as $\hat{\rvx} = (\mW^{\top} \circ \gF^{-1})(\rvy_{\rvn}, \hat{\rvz})$.

The training objective comprises three terms: (i) the image reconstruction loss $\gL_{\text{img}} = \|\hat{\rvx} - \rvx_{\rvc}\|_{1}$, which ensures fidelity; (ii) the low-frequency alignment loss $\gL_{\text{lf}} = \|\rvy_{\rvn} - \rvy_{\rvc}\|_2^2$, designed to enforce consistency in the LR representations; and (iii) the high-frequency restoration loss $\gL_{\text{hf}} = \|\rvz_{\rvc} - \hat{\rvz}\|_2^2$.  The total loss is defined as
\begin{equation}
\label{eq:restoration_loss}
\gL_{\text{denoising}} = \lambda_{\text{img}}\gL_{\text{img}} + \lambda_{\text{lf}}\gL_{\text{lf}} + \lambda_{\text{hf}}\gL_{\text{hf}},
\end{equation}
where $\lambda_{\text{img}}, \lambda_{\text{lf}}$, and $\lambda_{\text{hf}}$ are weighting parameters.

\section{Experiments}
We evaluate LR2Flow on image rescaling, image compression, and image denoising tasks to demonstrate its effectiveness and versatility. All experiments are conducted on a single NVIDIA H800 GPU. Regarding the architectural configuration, we set the hierarchy level to $T=1$ for the compression and denoising tasks. For image rescaling, we define $T = \log_{2} s$, where $s$ denotes the rescaling factor. Across all configurations, each flow level $f^{(l)}$ comprises $M=8$ flow blocks, utilizing affine coupling layers to implement the invertible transformation $g^{(l)}_i$. In the reported results, the best scores are highlighted in \textbf{bold}, while the second-best scores are \underline{underlined}.

\subsection{Image Rescaling}

We evaluate image rescaling at rescaling factors $s=2$ and $s=4$.
For both factors, the model is trained on the DIV2K~\cite{agustsson2017ntire} dataset for 500k iterations using the AdamW optimizer~\cite{loshchilov2017decoupled} with $\beta_1=0.9$, $\beta_2=0.99$, and zero weight decay. The learning rate is initialized at $2 \times 10^{-4}$ and reduced by half at iterations 100k, 200k, 300k, 350k, 400k, and 450k.  We utilize a batch size of 16 with randomly cropped $160 \times 160$ patches and apply standard geometric augmentations, including horizontal flips and rotations. The loss weights in~\eqref{eq:rescaling_loss} are set to $\lambda_{\text{HR}} = 1.0$, $\lambda_{\text{LR}} = 5 \times 10^{-2}$, and $\lambda_{\text{dist}} = 10^{-5}$. Model performance is reported on the DIV2K validation set and four standard benchmarks: Set5~\cite{bevilacqua2012low}, Set14~\cite{zeyde2012single}, BSD100~\cite{martin2001database}, and Urban100~\cite{huang2015single}. Following established protocols~\cite{xiao2020invertible,xiao2023invertible,liang2021hierarchical}, we compute PSNR and SSIM metrics on the Y channel of the YCbCr color space to assess HR image reconstruction.

\begin{table*}[ht]
\caption{Quantitative comparison of image rescaling performance (PSNR/SSIM) on benchmark datasets.}
\label{tab:quan_rescaling}
\centering
\resizebox{\linewidth}{!}{

\begin{tabular}{c|l|cc|cc|cc|cc|cc}
\Xhline{2\arrayrulewidth}
\multirow{2}{*}{Scale} & \multirow{2}{*}{\makecell[l]{Downscaling \& \\ Upscaling}} & \multicolumn{2}{c|}{Set5} & \multicolumn{2}{c|}{Set14} & \multicolumn{2}{c|}{BSD100} & \multicolumn{2}{c|}{Urban100} & \multicolumn{2}{c}{DIV2K} \\
\cline{3-12}
& & PSNR & SSIM & PSNR & SSIM & PSNR & SSIM & PSNR & SSIM & PSNR & SSIM \\
\Xhline{2\arrayrulewidth}
\multirow{14}{*}{$\times2$} & {Bicubic \& Bucibic} & 33.66 & 0.9299 & 30.24 & 0.8688 & 29.56 & 0.8431 & 26.88 & 0.8403 & 31.01 & 0.9393 \\
& {Bicubic \& SRCNN~\cite{dong2015image}} & 36.66 & 0.9542 & 32.45 & 0.9067 & 31.36 & 0.8879 & 29.50 & 0.8946 & - & - \\
& Bicubic \& CARN~\cite{ahn2018fast} & 37.76 & 0.9590 & 33.52 & 0.9166 & 32.09 & 0.8978 & 31.92 & 0.9256 & - & - \\
& Bicubic \& EDSR~\cite{lim2017enhanced} & 38.20 & 0.9606 & 34.02 & 0.9204 & 32.37 & 0.9018 & 33.10 & 0.9363 & 35.12 & 0.9699 \\
& Bicubic \& RCAN~\cite{zhang2018image} & 38.27 & 0.9614 & 34.12 & 0.9216 & 32.41 & 0.9027 & 33.34 & 0.9384 & - & - \\
& Bicubic \& SAN~\cite{dai2019second} & 38.31 & 0.9620 & 34.07 & 0.9213 & 32.42 & 0.9208 & 33.10 & 0.9370 & - & - \\
& Bicubic \& SwinIR~\cite{liang2021swinir} & 38.42 & 0.9623 & 34.46 & 0.9250 & 32.53 & 0.9041 & 33.81 & 0.9427 & - & - \\
& Bicubic \& HAT~\cite{chen2023activating} & 38.91 & 0.9646 & 35.29 & 0.9293 & 32.74 & 0.9066 & 35.09 & 0.9505 & - & - \\
\cline{2-12}
& TAD \& TAU~\cite{kim2018task} & 38.46 & - & 35.52 & - & 36.68 & - & 35.03 & - & 39.01 & - \\
& CAR \& EDSR~\cite{sun2020learned} & 38.94 & 0.9658 & 35.61 & 0.9404 & 33.83 & 0.9262 & 35.24 & 0.9572 & 38.26 & 0.9599 \\
& IRN~\cite{xiao2020invertible} & 43.99 & 0.9871 & 40.79 & 0.9778 & 41.32 & 0.9876 & 39.92 & 0.9865 & 44.32 & 0.9908 \\
& T-IRN~\cite{bao2025tinvblock} & 44.86 & 0.9883 & 41.70 & 0.9809 & \underline{42.68} & \underline{0.9913} & 41.05 & 0.9899 & 45.46 & 0.9932 \\
& HCFlow~\cite{liang2021hierarchical} & \underline{45.08} & \underline{0.9895} & \underline{42.30} & \underline{0.9827} & 42.61 & 0.9909 & \underline{41.92} & \underline{0.9928} & \underline{45.66} & \underline{0.9933} \\
& LR2Flow & \textbf{46.87} & \textbf{0.9932} & \textbf{43.98} & \textbf{0.9880} & \textbf{44.94} & \textbf{0.9948} & \textbf{43.56} & \textbf{0.9951} & \textbf{47.55} & \textbf{0.9958} \\
\hline
\multirow{14}{*}{$\times4$} & Bicubic \& Bicubic & 28.42 & 0.8104 & 26.00 & 0.7027 & 25.96 & 0.6675 & 23.14 & 0.6577 & 26.66 & 0.8521 \\
& Bicubic \& SRCNN~\cite{dong2015image} & 30.48 & 0.8628 & 27.50 & 0.7513 & 26.90 & 0.7101 & 24.52 & 0.7221 & - & - \\
& Bicubic \& CARN~\cite{ahn2018fast} & 32.13 & 0.8937 & 28.60 & 0.7806 & 27.58 & 0.7349 & 26.07 & 0.7837 & - & - \\
& Bicubic \& EDSR~\cite{lim2017enhanced} & 32.46 & 0.8968 & 28.80 & 0.7760 & 27.71 & 0.7420 & 26.64 & 0.8033 & 29.38 & 0.9032 \\
& Bicubic \& RCAN~\cite{zhang2018image} & 32.63 & 0.9002 & 28.87 & 0.7889 & 27.77 & 0.7436 & 26.82 & 0.8087 & 30.77 & 0.8460 \\
& Bicubic \& SAN~\cite{dai2019second} & 32.64 & 0.9003 & 28.92 & 0.7888 & 27.78 & 0.7436 & 26.79 & 0.8068 & - & - \\
& Bicubic \& SwinIR~\cite{liang2021swinir} & 32.92 & 0.9044 & 29.09 & 0.7950 & 27.92 & 0.7489 & 27.45 & 0.8254 & - & - \\
& Bicubic \& HAT~\cite{chen2023activating} & 33.30 & 0.9083 & 29.47 & 0.8015 & 28.09 & 0.7551 & 28.60 & 0.8498 & - & - \\
\cline{2-12}
& TAD \& TAU~\cite{kim2018task} & 31.81 & - & 28.63 & - & 28.51 & - & 26.63 & - & 31.16 & - \\
& CAR \& EDSR~\cite{sun2020learned} & 33.88 & 0.9174 & 30.31 & 0.8382 & 29.15 & 0.8001 & 29.28 & 0.8711 & 32.82 & 0.8837 \\
& IRN~\cite{xiao2020invertible} & 36.19 & 0.9451 & 32.67 & 0.9015 & 31.64 & 0.8826 & 31.41 & 0.9157 & 35.07 & 0.9318 \\
& T-IRN~\cite{bao2025tinvblock} & \underline{36.29} & 0.9452 & 32.70 & 0.9003 & 31.64 & 0.8837 & 31.19 & 0.9132 & 35.10 & 0.9328 \\
& HCFlow~\cite{liang2021hierarchical} & \underline{36.29} & \underline{0.9468} & \underline{33.02} & \underline{0.9065} & \underline{31.74} & \underline{0.8864} & \underline{31.62} & \underline{0.9206} & \underline{35.23} & \underline{0.9346} \\
& LR2Flow & \textbf{37.00} & \textbf{0.9493} & \textbf{33.89} & \textbf{0.9121} & \textbf{32.33} & \textbf{0.8901} & \textbf{33.37} & \textbf{0.9352} & \textbf{35.97} & \textbf{0.9383} \\
\Xhline{2\arrayrulewidth}
\end{tabular}
}
\end{table*}

\textbf{Results.} We evaluate LR2Flow against three categories of rescaling approaches: (i) standard SR approaches using fixed Bicubic downscaling~\cite{dong2015image,ahn2018fast,lim2017enhanced,zhang2018image,dai2019second,liang2021swinir,chen2023activating}; (ii) encoder-decoder frameworks that jointly learn downscaling and upscaling~\cite{kim2018task,sun2020learned}; and (iii) flow-based rescaling models~\cite{xiao2020invertible,liang2021hierarchical,bao2025tinvblock}.

Quantitative results presented in Table~\ref{tab:quan_rescaling} indicate that SR-only pipelines, including advanced Transformer baselines such as SwinIR and HAT, exhibit limited performance due to their reliance on fixed downscaling kernels. This result underscores the value of task-aware or content-adaptive downscaling. Although encoder-decoder methods that jointly optimize downscaling and upscaling deliver substantial gains, they often suffer from underconstrained feature learning due to insufficient regularization. In contrast, flow-based models leverage invertibility and density estimation to explicitly model high-frequency residuals. This capability reduces information loss and improves reconstruction quality.


\begin{figure*}
\caption{Qualitative comparison of $\times 4$ image rescaling results. Representative examples are selected from the Set14, BSD100, and Urban100 datasets.}
\label{fig:visual_rescaling}
\centering
\includegraphics[width=\linewidth]{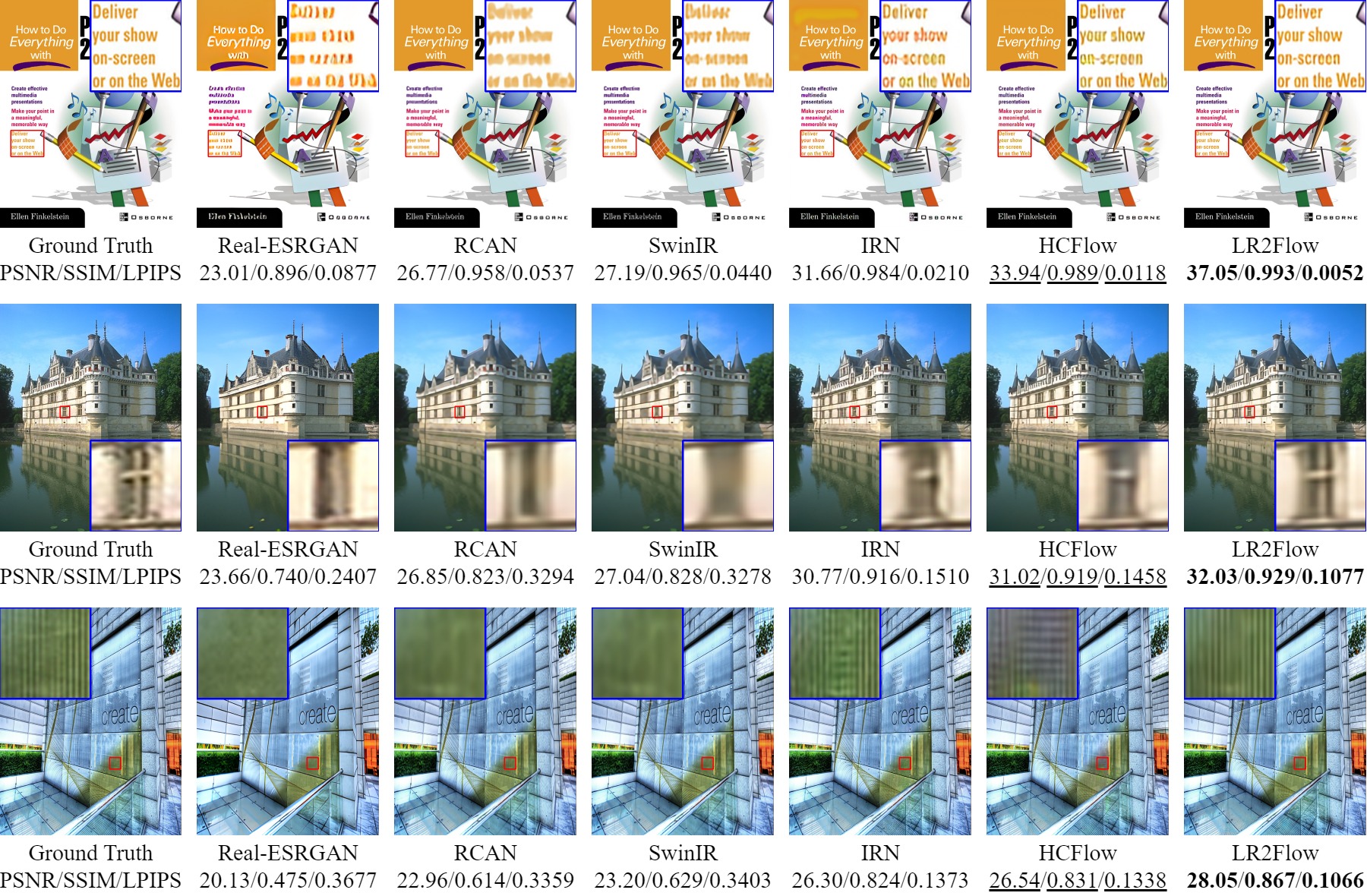}
\end{figure*}

Our proposed LR2Flow significantly enhances HR reconstruction, achieving superior PSNR and SSIM scores compared to prior methods. As illustrated in Figure~\ref{fig:visual_rescaling}, LR2Flow preserves edges and textures more faithfully, effectively reducing blur and distortion. We attribute these performance gains to the enhanced stability provided by the redundant wavelet tight frame.


\subsection{Image Compression}
Our compression pipeline integrates the $\times2$ rescaling model with lossy JPEG compression. To exploit the structural similarity between LR representations in rescaling and compression tasks, we initialize the compression model using weights pretrained on $\times2$ rescaling.
We fine-tune the model for $200$k iterations using the AdamW optimizer ($\beta_1 = 0.9, \beta_2 = 0.99$, weight decay $= 0$). The learning rate is initialized at $2 \times 10^{-5}$ and halved at iterations $50$k, $100$k, $150$k, and $175$k. Training employs a batch size of $16$, using randomly cropped $160 \times 160$ patches and standard geometric augmentations. The objective follows~\eqref{eq:rescaling_loss}, with weights set to $\lambda_{\text{HR}} = 1.0$, $\lambda_{\text{LR}} = 5 \times 10^{-2}$, and $\lambda_{\text{dist}} = 10^{-5}$. During training, we uniformly sample the JPEG quality factor (QF) from the set $\{50, 55, \dots, 90\}$. For evaluation, we report PSNR and SSIM metrics on the Y channel of the YCbCr color space for the Set5, Set14, BSD100, Urban100, and DIV2K validation sets, at QF values of $\{30, 50, 70, 80, 90\}$.

\begin{table*}
\centering
\caption{Quantitative comparison of rescaling-based compression methods. The rescaling factor is set to $s=2$.}
\label{tab:quan_compression}
\resizebox{\linewidth}{!}{
\begin{tabular}{l|cc|cc|cc|cc|cc}
\Xhline{2\arrayrulewidth}
\multirow{2}{*}{\makecell[l]{Downscaling \& \\ Upscaling}} & \multicolumn{2}{c|}{JPEG QF=30} & \multicolumn{2}{c|}{JPEG QF=50} & \multicolumn{2}{c|}{JPEG QF=70} & \multicolumn{2}{c|}{JPEG QF=80} & \multicolumn{2}{c}{JPEG QF=90} \\
\cline{2-11}
& PSNR & SSIM & PSNR & SSIM & PSNR & SSIM & PSNR & SSIM & PSNR & SSIM \\
\Xhline{2\arrayrulewidth}
Bicubic \& Bicubic & 30.11 & 0.8296 & 30.75 & 0.8494 & 31.30 & 0.8659 & 31.66 & 0.8765 & 32.12 & 0.8904 \\
Bicubic \& EDSR~\cite{lim2017enhanced} & 27.58 & 0.8166 & 27.88 & 0.8352 & 28.16 & 0.8512 & 28.34 & 0.8616 & 28.55 & 0.8755 \\
Bicubic \& RCAN~\cite{zhang2018image} & 28.57 & 0.8158 & 29.00 & 0.8346 & 29.34 & 0.8505 & 29.56 & 0.8609 & 29.85 & 0.8746 \\
Bicubic \& SwinIR~\cite{liang2021swinir} & 30.65 & 0.8400 & 31.56 & 0.8633 & 32.45 & 0.8838 & 33.11 & 0.8978 & 34.17 & 0.9171 \\
IRN~\cite{xiao2020invertible} & 29.11 & 0.8133 & 29.60 & 0.8334 & 30.00 & 0.8509 & 30.25 & 0.8621 & 30.56 & 0.8767 \\
\hline
IRN + CRM~\cite{xiao2023invertible} & 30.41 & 0.8347 & 31.61 & 0.8637 & 32.13 & 0.8784 & 32.76 & 0.8925 & 33.67 & 0.9114 \\
SAIN~\cite{yang2023self} & \underline{31.47} & \underline{0.8747} & \underline{33.17} & \underline{0.9082} & \underline{34.73} & \underline{0.9296} & \underline{35.46} & \underline{0.9374} & \underline{35.96} & \underline{0.9419} \\
LR2Flow & \textbf{32.37} & \textbf{0.8968} & \textbf{33.98} & \textbf{0.9217} & \textbf{35.15} & \textbf{0.9353} & \textbf{35.71} & \textbf{0.9409} & \textbf{36.12} & \textbf{0.9445} \\
\Xhline{2\arrayrulewidth}
\end{tabular}
}
\end{table*}

\textbf{Results.} We compare LR2Flow against two categories of rescaling-based compression pipelines: (i) disjoint approaches that pair frozen rescaling models~\cite{lim2017enhanced,liang2021swinir,zhang2018image,xiao2020invertible} with pretrained JPEG artifact removal networks~\cite{jiang2021towards}; and (ii) end-to-end approaches that jointly optimize the rescaling network and the JPEG simulator~\cite{yang2023self,xiao2023invertible}.


Table~\ref{tab:quan_compression} details the reconstruction performance on the DIV2K validation set. Pipelines utilizing off-the-shelf JPEG decoders with frozen rescaling networks significantly underperform, as fixed upscalers fail to adapt to the specific information loss introduced by lossy compression. While jointly optimizing the rescaling and JPEG decoding narrows this performance gap, such methods typically necessitate auxiliary restoration modules. In contrast, LR2Flow provides a unified framework that learns robust LR representations without requiring auxiliary networks. This design achieves superior performance, particularly at low QFs.
Figure~\ref{fig:qualitative_compression} presents a qualitative comparison. Disjoint methods suffer from characteristic blocking and ringing artifacts, whereas competing joint optimization methods tend to over-smooth textures.
In contrast, LR2Flow effectively mitigates these artifacts while preserving sharp edges, demonstrating its superior capability in modeling high-frequency details. Furthermore, Figure~\ref{fig:quantitative_compression} corroborates these gains across Set5, Set14, BSD100, and Urban100. LR2Flow consistently outperforms competitors across all datasets and QF levels, underscoring its robustness and adaptability.

\begin{figure*}
\caption{Qualitative comparison of compression results on the DIV2K validation set with a rescaling factor $s=2$ and JPEG QF=30.}
\label{fig:qualitative_compression}
\centering
\includegraphics[width=\linewidth]{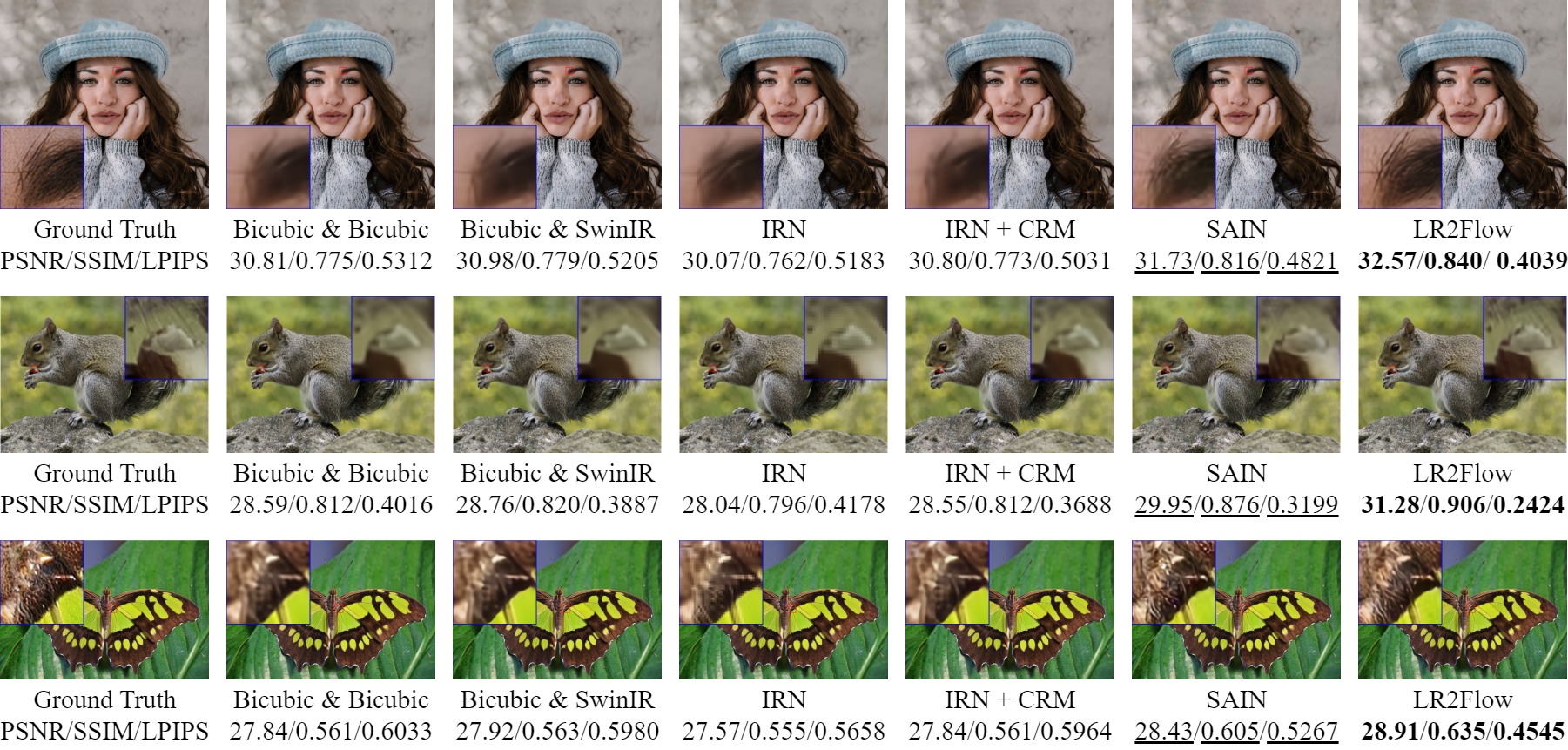}
\end{figure*}


\begin{figure*}
\caption{Quantitative comparison of compression performance with $\times 2$ rescaling across various JPEG QFs on the Set5, Set14, BSD100, and Urban100 benchmarks.}
\label{fig:quantitative_compression}
\centering
\includegraphics[width=\linewidth]{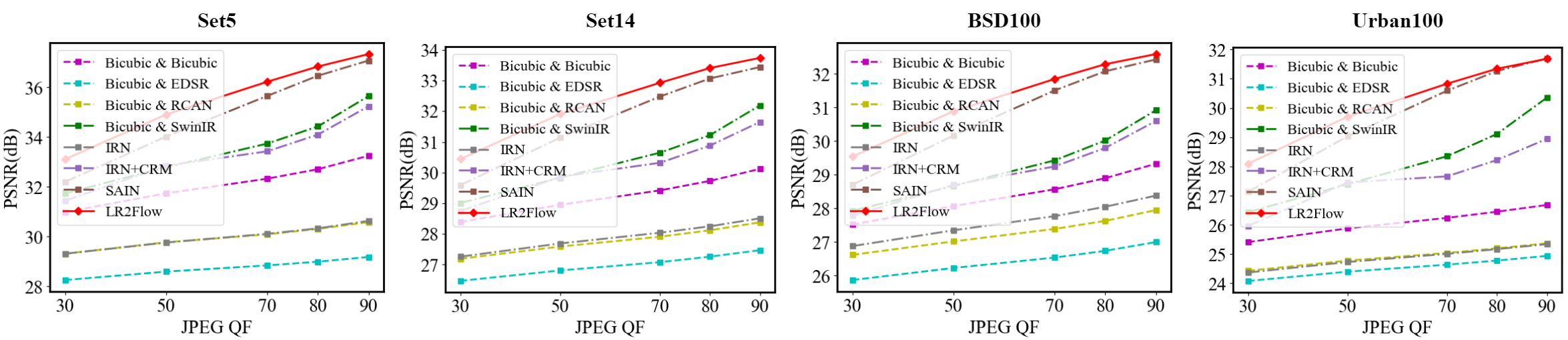}
\end{figure*}


\begin{figure}
\caption{Quantitative evaluation of the R-D trade-off. We report the curves of bpp versus PSNR, SSIM, and LPIPS for the compared compression methods.}
\label{fig:bpp_recon}
\centering
\includegraphics[width=\linewidth]{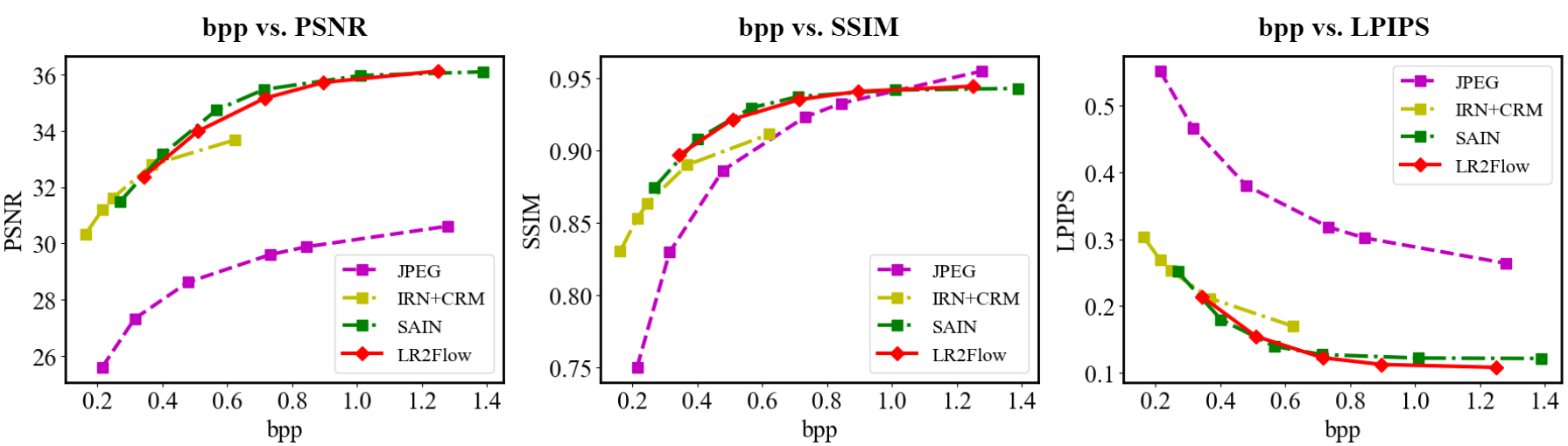}
\end{figure}

We further evaluate LR2Flow on the rate-distortion (R-D) trade-off, comparing it against standard JPEG and two leading rescaling-based compression methods: IRN+CRM~\cite{xiao2023invertible} and SAIN~\cite{yang2023self}. Compression efficiency is measured in bits per pixel (bpp). As shown in Figure~\ref{fig:bpp_recon}, LR2Flow exhibits competitive overall performance and is particularly strong on perceptual metrics, despite not utilizing the auxiliary JPEG-aware decoders employed by IRN+CRM and SAIN.
These results validate LR2Flow as a promising approach for high-ratio image compression. It is worth noting that LR2Flow was fine-tuned using~\eqref{eq:rescaling_loss}, which prioritizes reconstruction fidelity while neglecting the entropy of the LR codes. To further improve the R-D curve, a feasible strategy would be to incorporate an explicit rate-distortion regularizer~\cite{balle2016end, balle2018variational} to penalize bit consumption. We leave this direction for future work.


\subsection{Image Denoising}

We evaluate our denoising approach on the standard additive white Gaussian noise (AWGN) removal task. We train our model for 1M iterations on a composite dataset comprising DIV2K, Flickr2K~\cite{lim2017enhanced}, and the Waterloo Exploration Database~\cite{ma2016waterloo}.
Optimization is performed using AdamW ($\beta_1 = 0.9, \beta_2 = 0.99$, zero weight decay) with an initial learning rate of $2\times 10^{-4}$ that is halved every 100k iterations. We use a batch size of 16 with randomly cropped $128 \times 128$ patches and apply geometric augmentations. The loss weights in~\eqref{eq:restoration_loss} are set to $\lambda_{\text{img}} = 1.0$, $\lambda_{\text{lf}} = \lambda_{\text{hf}} = 10^{-2}$. Training pairs are synthesized by adding AWGN with noise levels $\sigma_{\rvn}$ sampled uniformly from the interval $[5, 55]$. For evaluation, we report PSNR on CBSD68~\cite{martin2001database}, Kodak24~\cite{kodak24}, McMaster~\cite{zhang2011color}, and Urban100 at $\sigma_{\rvn}\in\{15,25,50\}$, employing a geometric self-ensemble strategy~\cite{timofte2016seven} during inference.

\begin{table*}
\centering
\caption{Quantitative evaluation of denoising performance (PSNR) for various methods on the CBSD68, Kodak24, McMaster, and Urban100 datasets.}
\label{tab:quan_denoising}
\resizebox{\linewidth}{!}
{
\begin{tabular}{l|ccc|ccc|ccc|ccc}
\Xhline{2\arrayrulewidth}
\multirow{2}{*}{Method} & \multicolumn{3}{c|}{CBSD68} & \multicolumn{3}{c|}{Kodak24} & \multicolumn{3}{c|}{McMaster} & \multicolumn{3}{c}{Urban100} \\
\cline{2-13}
& $\sigma_{\rvn}$=15 & $\sigma_{\rvn}$=25 & $\sigma_{\rvn}$=50 & $\sigma_{\rvn}$=15 & $\sigma_{\rvn}$=25 & $\sigma_{\rvn}$=50 & $\sigma_{\rvn}$=15 & $\sigma_{\rvn}$=25 & $\sigma_{\rvn}$=50 & $\sigma_{\rvn}$=15 & $\sigma_{\rvn}$=25 & $\sigma_{\rvn}$=50 \\
\Xhline{2\arrayrulewidth}
IRCNN~\cite{zhang2017learning} & 33.86 & 31.16 & 27.86 & 34.69 & 32.18 & 28.93 & 34.58 & 32.18 & 28.91 & 33.78 & 31.20 & 27.70 \\
FFDNet~\cite{zhang2018ffdnet} & 33.87 & 31.21 & 27.96 & 34.63 & 32.13 & 28.98 & 34.66 & 32.35 & 29.18 & 33.83 & 31.40 & 28.05 \\
DnCNN~\cite{zhang2017beyond} & 33.90 & 31.24 & 27.95 & 34.60 & 32.14 & 28.95 & 33.45 & 31.52 & 28.62 & 32.98 & 30.81 & 27.59 \\
DSNet~\cite{peng2019dilated} & 33.91 & 31.28 & 28.05 & 34.63 & 32.16 & 29.05 & 34.67 & 32.40 & 29.28 & - & - & - \\
DRUNet~\cite{zhang2021plug} & \underline{34.30} & \underline{31.69} & \textbf{28.51} & \underline{35.31} & \underline{32.89} & \underline{29.86} & \textbf{35.40} & \underline{33.14} & \textbf{30.08} & \underline{34.81} & \underline{32.60} & \underline{29.61} \\
LR2Flow & \textbf{34.32} & \textbf{31.70} & \underline{28.50} & \textbf{35.35} & \textbf{32.92} & \textbf{29.88} & \textbf{35.40} & \textbf{33.15} & \textbf{30.08} & \textbf{34.86} & \textbf{32.66} & \textbf{29.65} \\
\hline
RPCNN~\cite{xia2020identifying} & - & 31.24 & 28.06 & - & 32.34 & 29.25 & - & 32.33 & 29.33 & - & 31.81 & 28.62 \\
BRDNet~\cite{tian2020image} & 34.10 & 31.43 & 28.16 & 34.88 & 32.41 & 29.22 & 35.08 & 32.75 & 29.52 & 34.42 & 31.99 & 28.56 \\
RNAN~\cite{zhang2019residual} & - & - & 28.27 & - & - & 29.58 & - & - & 29.72 & - & - & 29.08 \\
RDN~\cite{zhang2021residual} & - & - & 28.31 & - & - & 29.66 & - & - & - & - & - & 29.38 \\
\Xhline{2\arrayrulewidth}
\end{tabular}
}
\end{table*}


\textbf{Results.} We evaluate the denoising performance of LR2Flow against advanced CNN baselines under two training regimes: (i) unified models trained to handle multiple noise levels~\cite{zhang2017learning,zhang2018ffdnet,zhang2017beyond,peng2019dilated,zhang2021plug}; and (ii) noise-specific models trained independently for each noise level~\cite{xia2020identifying,tian2020image,zhang2019residual,zhang2021residual}.
Quantitative results are detailed in Table~\ref{tab:quan_denoising}, where LR2Flow demonstrates a distinct advantage over competing methods. Figure~\ref{fig:qualitative_denoising} further provides visual comparisons at a noise level of $\sigma_{\rvn}=50$. While standard CNNs are prone to over-smoothing and generating hallucinated artifacts under high noise conditions, LR2Flow preserves faithful colors and sharp textures, demonstrating robust detail recovery.

Furthermore, we compare our method against classical model-based baselines, specifically ISTA~\cite{daubechies2004iterative} and TV denoising~\cite{rudin1992nonlinear}, to evaluate the advantages of LR2Flow's data-adaptive nonlinear mappings over fixed thresholding strategies. Quantitative comparisons are presented in Table~\ref{tab:quan_denoising_classic}. ISTA and TV exhibit a marked performance gap relative to LR2Flow, which underscores the superiority of LR2Flow's data-adaptive design compared to hand-crafted priors.


\begin{table*}[ht]
\centering
\caption{Quantitative comparison of LR2Flow with classical denoising methods utilizing hand-crafted priors in the image or wavelet domain.}
\label{tab:quan_denoising_classic}
\resizebox{\linewidth}{!}{
\begin{tabular}{ll|ccc|ccc|ccc|ccc}
\Xhline{2\arrayrulewidth}
\multicolumn{2}{l|}{{\multirow{2}{*}{Method}}} & \multicolumn{3}{c|}{CBSD68} & \multicolumn{3}{c|}{Kodak24} & \multicolumn{3}{c|}{McMaster} & \multicolumn{3}{c}{Urban100} \\
\cline{3-14}
& & $\sigma_{\rvn}$=15 & $\sigma_{\rvn}$=25 & $\sigma_{\rvn}$=50 & $\sigma_{\rvn}$=15 & $\sigma_{\rvn}$=25 & $\sigma_{\rvn}$=50 & $\sigma_{\rvn}$=15 & $\sigma_{\rvn}$=25 & $\sigma_{\rvn}$=50 & $\sigma_{\rvn}$=15 & $\sigma_{\rvn}$=25 & $\sigma_{\rvn}$=50 \\
\Xhline{2\arrayrulewidth}
\multicolumn{2}{l|}{Total Variation} & 17.79 & 17.49 & 17.08 & 17.81 & 17.61 & 17.33 & 13.71 & 13.60 & 13.42 & 17.60 & 17.04 & 16.25 \\ 
\hline
\multirow{3}{*}{ISTA} & PixelUnshuffle & 24.84 & 20.54 & 15.03 & 25.15 & 20.96 & 15.53 & 25.15 & 20.96 & 15.53 & 24.90 & 20.64 & 15.13 \\
& Haar & 27.58 & 24.22 & 19.59 & 28.02 & 24.56 & 19.68 & 28.38 & 25.00 & 20.22 & 27.05 & 23.51 & 18.98 \\
& Tight Frame & \underline{29.38} & \underline{26.57} & \underline{22.80} & \underline{30.02} & \underline{27.11} & \underline{23.04} & \underline{30.87} & \underline{27.99} & \underline{23.83} & \underline{28.58} & \underline{25.47} & \underline{21.65} \\
\hline
\multicolumn{2}{l|}{LR2Flow} & \textbf{34.32} & \textbf{31.70} & \textbf{28.50} & \textbf{35.35} & \textbf{32.92} & \textbf{29.88} & \textbf{35.40} & \textbf{33.15} & \textbf{30.08} & \textbf{34.86} & \textbf{32.66} & \textbf{29.65} \\ 
\Xhline{2\arrayrulewidth}
\end{tabular}
}
\end{table*}


\begin{figure*}
\centering
\caption{Qualitative comparison of denoising results on the CBSD68 benchmark at noise level $\sigma_{\rvn}=50$.}
\label{fig:qualitative_denoising}
\includegraphics[width=\linewidth]{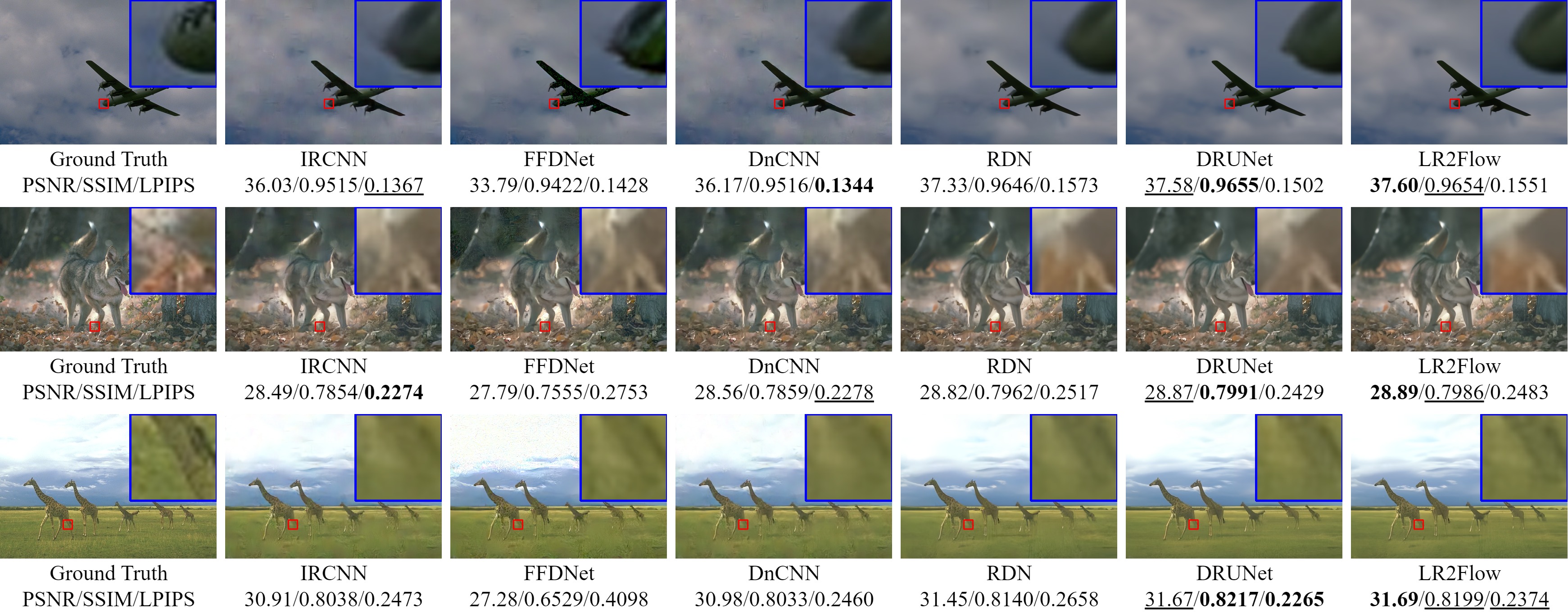}
\end{figure*}




\subsection{Ablation Study}
\label{sec:experiments_ablation}
In this section, we conduct ablation studies to assess the influence of the following factors on model performance: (i) the selection of the wavelet transformation; (ii) the choice of the invertible backbone; and (iii) the trade-off between model size and reconstruction quality.

\textbf{Comparison of $\mW$ selection.} We evaluate the influence of different choices for the transformation $\mW$, specifically comparing the Haar wavelet, the wavelet tight frame, and PixelUnshuffle~\cite{shi2016real}. Table~\ref{tab:ablation_wavelet_toy} reports the $\times2$ rescaling performance using $M=1$ flow block per level, employing the affine coupling layer~\eqref{eq:affine_coupling} as $g^{(l)}_{i}$. The wavelet tight frame outperforms other configurations by achieving the lowest reconstruction error, validating its theoretical advantage according to Proposition~\ref{prop:single_affine_coupling_case}. These findings are further corroborated by the results in Table~\ref{tab:ablation_W_inv_arch}, which confirm the superiority of wavelet tight frame across various tasks and invertible architectures.

\begin{table}[ht]
\caption{Ablation study of $\mW$ selection on $\times 2$ rescaling task, configured with $M=1$ flow block in each flow. 
}
\centering
\resizebox{0.8\linewidth}{!}{
\begin{tabular}{l|cc|cc|cc|cc}
\Xhline{2\arrayrulewidth}
\multirow{2}{*}{$\mW$} & \multicolumn{2}{c|}{Set5} & \multicolumn{2}{c|}{Set14} & \multicolumn{2}{c|}{BSD100} & \multicolumn{2}{c}{Urban100} \\
\cline{2-9}
& PSNR & SSIM & PSNR & SSIM & PSNR & SSIM & PSNR & SSIM \\
\Xhline{2\arrayrulewidth}
PixelUnshuffle & 19.44 & 0.8163 & 18.44 & 0.8132 & 21.40 & 0.9057 & 19.33 & 0.8720 \\
Haar & \underline{35.27} & \underline{0.9417} & \underline{31.06} & \underline{0.8890} & \underline{30.33} & \underline{0.8823} & \textbf{29.94} & \textbf{0.9235} \\
Tight Frame & \textbf{35.55} & \textbf{0.9437} & \textbf{31.30} & \textbf{0.8910} & \textbf{30.62} & \textbf{0.8869} & \underline{29.75} & \underline{0.9203} \\
\Xhline{2\arrayrulewidth}
\end{tabular}
}
\label{tab:ablation_wavelet_toy}
\end{table}

\begin{table}[ht]
\caption{Ablation study of the transformation $\mW$ and invertible architectures. Experiments are conducted with configurations $T=1$ and $M=8$. Results are reported on the DIV2K validation set across three tasks: $\times 2$ image rescaling, image compression ($\times2$ rescaling with JPEG QF$=30$), and image denoising (AWGN with $\sigma_{\rvn}=25$).}
\label{tab:ablation_W_inv_arch}
\centering
\resizebox{0.8\linewidth}{!}{
\begin{tabular}{l|cccc|cc|cc}
\Xhline{2\arrayrulewidth}
\multirow{3}{*}{$\mW$} & \multicolumn{4}{c|}{Rescaling} & \multicolumn{2}{c|}{\multirow{2}{*}{Compression}} & \multicolumn{2}{c}{\multirow{2}{*}{Denoising}} \\
& \multicolumn{2}{c}{Affine Coupling} & \multicolumn{2}{c|}{iResBlock} & & & & \\
\cline{2-9}
& PSNR & SSIM & PSNR & SSIM & PSNR & SSIM & PSNR & SSIM \\
\Xhline{2\arrayrulewidth}
PixelUnshuffle & 42.34 & 0.9863 & 42.42 & 0.9869 & \underline{31.88} & \underline{0.8937} & 32.34 & 0.8918 \\
Haar & \underline{45.64} & \underline{0.9933} & 41.63 & 0.9850 & 31.68 & 0.8837 & \underline{32.63} & \underline{0.8969} \\
Tight Frame & \textbf{47.55} & \textbf{0.9958} & 43.19 & 0.9880 & \textbf{32.37} & \textbf{0.8968} & \textbf{32.84} & \textbf{0.8997} \\
\Xhline{2\arrayrulewidth}
\end{tabular}
}
\end{table}

\textbf{Comparison of invertible backbones.} In Table~\ref{tab:ablation_W_inv_arch}, we compare the performance of the affine coupling layer~\eqref{eq:affine_coupling} against the iResBlock~\eqref{eq:iresblock} when employed as the nonlinear invertible transformation $g^{(l)}_{i}$ for the $\times2$ rescaling task. We observe that the affine coupling layer consistently yields superior performance, regardless of the chosen wavelet transformation.

\begin{table*}
\caption{Image rescaling results on the DIV2K validation set for different sizes of HCFlow~\cite{liang2021hierarchical} and LR2Flow.}
\label{tab:param_efficiency}
\centering
\resizebox{\linewidth}{!}{
\begin{tabular}{c|lc|cc|cc|cc|cc|cc}
\Xhline{2\arrayrulewidth}
\multirow{2}{*}{Scale} & \multirow{2}{*}{Model} & \multirow{2}{*}{Params} & \multicolumn{2}{c|}{Set5} & \multicolumn{2}{c|}{Set14} & \multicolumn{2}{c|}{BSD100} & \multicolumn{2}{c|}{Urban100} & \multicolumn{2}{c}{DIV2K} \\
\cline{4-13}
& & & PSNR & SSIM & PSNR & SSIM & PSNR & SSIM & PSNR & SSIM & PSNR & SSIM \\
\Xhline{2\arrayrulewidth}
\multirow{4}{*}{$\times 2$} & HCFlow & 2.07M & 45.08 & 0.9895 & 42.30 & 0.9827 & 42.61 & 0.9909 & 41.92 & 0.9928 & 45.66 & 0.9933 \\
& HCFlow-L & 4.48M & 45.33 & 0.9893 & 42.50 & 0.9828 & 43.05 & 0.9918 & 42.28 & 0.9933 & 46.01 & 0.9939 \\
& LR2Flow & 2.74M & \underline{46.87} & \underline{0.9932} & \underline{43.98} & \underline{0.9880} & \underline{44.94} & \underline{0.9948} & \underline{43.56} & \underline{0.9951} & \underline{47.55} & \underline{0.9958} \\
& LR2Flow-L & 4.95M & \textbf{47.07} & \textbf{0.9933} & \textbf{44.27} & \textbf{0.9883} & \textbf{45.29} & \textbf{0.9952} & \textbf{44.05} & \textbf{0.9954} & \textbf{47.82} & \textbf{0.9961} \\
\hline
\multirow{4}{*}{$\times 4$} & HCFlow & 4.35M & 36.29 & 0.9468 & 33.02 & 0.9065 & 31.74 & 0.8864 & 31.62 & 0.9206 & 35.23 & 0.9346 \\
& HCFlow-L & 9.38M & 36.54 & 0.9483 & 33.43 & 0.9115 & 31.97 & 0.8899 & 32.62 & 0.9307 & 35.61 & 0.9379 \\
& LR2Flow & 7.15M & \underline{37.00} & \underline{0.9493} & \underline{33.89} & \underline{0.9121} & \underline{32.33} & \underline{0.8901} & \underline{33.37} & \underline{0.9352} & \underline{35.97} & \underline{0.9383} \\
& LR2Flow-L & 9.36M & \textbf{37.06} & \textbf{0.9494} & \textbf{33.91} & \textbf{0.9122} & \textbf{32.40} & \textbf{0.8908} & \textbf{33.50} & \textbf{0.9362} & \textbf{36.04} & \textbf{0.9387} \\
\Xhline{2\arrayrulewidth}
\end{tabular}
}
\end{table*}

\textbf{Performance-complexity trade-off.} We analyze the scalability of LR2Flow by comparing our standard model and a higher-capacity variant (LR2Flow-L) against an enlarged state-of-the-art baseline (HCFlow-L) on image rescaling tasks. The results in Table~\ref{tab:param_efficiency} indicate that the standard LR2Flow surpasses HCFlow-L, despite having significantly fewer parameters. Moreover, the performance of LR2Flow improves further with increased capacity, highlighting the parameter efficiency and scalability of our approach.

\section{Conclusion}
We presented LR2Flow, a framework for low-resolution image representation that integrates a wavelet tight frame with normalizing flows. 
Our theoretical analysis elucidated how the data adaptivity of the wavelet tight frame influences reconstruction performance, particularly given the limited expressivity inherent in invertible architectures.
Experimental results validate our analysis.
Across image rescaling, compression, and denoising tasks, LR2Flow consistently outperforms comparable methods, demonstrating the effectiveness of our architectural design.

{
\appendix

\section{Proofs of Proposition~\ref{prop:single_affine_coupling_case}, Proposition~\ref{prop:case2_reconstruction_error}, and \eqref{eqn:bound}}
\label{appendix:affine_coupling_beyond}

\begin{proof}[Proof of Proposition~\ref{prop:single_affine_coupling_case}]
When $\Theta$ is defined as in~\eqref{eq:affine_coupling_class}, the objective function in~\eqref{eq:reconstruction_loss_inn} can be written as
\begin{equation*}
J(\rho, \eta) = \E_{q(\rvx)p(\rvz)}\left\|\mW_H^{\top} \left((\rvz - \eta(\mW_L\rvx)) \oslash \rho(\mW_L\rvx) - \mW_H\rvz\right)\right\|^2,
\end{equation*}
where $\oslash$ denote the element-wise division. Fix $\rvx_L = \mW_L\rvx$ and $\rvx_H = \mW_H\rvx$, and let $\bm{\rho} = \rho(\rvx_L)$ and $\bm{\eta} = \eta(\rvx_L)$ for brevity. Note that the $i$-th element of the vector $\mW_H^{\top}((\rvz - \bm{\eta}) \oslash \bm{\rho} - \rvx_H)$ follows a Gaussian distribution with
\begin{equation*}
{\textstyle \text{mean=$-\sum_{j=1}^{N-d} [\mW_H^{\top}]_{ij} \left( \bm{\eta}_{j} / \bm{\rho}_{j} + [\rvx_H]_j \right)$,}}
\quad
{\textstyle \text{variance=$\sigma^2 \sum_{j=1}^{N-d} \left( [\mW_H^{\top}]_{ij} / \bm{\rho}_{j} \right)^2$.}}
\end{equation*}
Let $q^{\rvc} = \mW_{\#}q$ denote the distribution of the wavelet coefficients, and let $q^{\rvc}_{L} = [\mW_L]_{\#} q$ denote the distribution of the low-frequency coefficients. Then,
\begin{equation*}
\begin{aligned}
J(\rho,\eta) = & {\E_{q^{\rvc}(\rvx_L,\rvx_H)}\big[\sum_{i=1}^d \big| \sum_{j=1}^{N-d} [\mW_H^{\top}]_{ij} \left( \bm{\eta}_{j} / \bm{\rho}_{j} + [\rvx_H]_j \right) \big|^2 + \sigma^2 \sum_{i=1}^d\sum_{j=1}^{N-d} \left( [\mW_H^{\top}]_{ij} / \bm{\rho}_{j} \right)^2\big]} \\
\geq & \E_{q^{\rvc}(\rvx_L, \rvx_H)}\left\| \mW_H^{\top} \left( \bm{\eta} \oslash  \bm{\rho} + \rvx_H \right) \right\|^2 \geq \E_{q^{\rvc}_{L}(\rvx_L)} \Tr \left( \Var \left[\rvx_H | \rvx_L\right] \mW_H \mW_H^{\top} \right).
\end{aligned}
\end{equation*}
The minimum is attained when $\bm{\eta} \oslash \bm{\rho} + \E[\rvx_H | \rvx_L] = \mathbf{0}$.
\end{proof} 


\begin{proof}[Proof of Proposition~\ref{prop:case2_reconstruction_error}]
Let $\mP = \mI - \mW_L^{\dagger}\mW_L$.
Define $\bm{\mu}(\rvx_L) = \E[\mW_H\rvx | \mW_L\rvx=\rvx_L]$. Consider a specific mapping $\gF \in \Theta$ defined in~\eqref{eq:extension_affine_coupling} with
$h(\rvx_H; \rvx_L) := -\rvx_H + \mW_H (\mW_L^{\dagger} \rvx_L - \mP \bm{\mu}(\rvx_L))$. The corresponding reconstruction is given by
\begin{equation*}
\hat{\rvx}(\rvx_L) = \E_{p(\rvz)}\left[\mW_L^{\top} \rvx_L + \mW_H^{\top} h^{-1}(\rvz; \rvx_L)\right] = \mW_L^{\top} \rvx_L + \mW_H^{\top} \mW_H (\mW_L^{\dagger} \rvx_L - \mP \bm{\mu}(\rvx_L)).
\end{equation*}
This implies $\rvx-\hat{\rvx}(\mW_L \rvx)=\mW_H^{\top}\mW_H\mP(\rvx - \bm{\mu}(\rvx_L))$. Consequently, we obtain the following bound of the reconstruction error $\rve^*$ defined in~\eqref{eq:reconstruction_loss_inn}
\begin{equation}
\label{eq:bound_extension_affine_coupling}
\begin{aligned}
\rve^{*} \leq \E_{q(\rvx)} \|\mW_H^{\top} \mW_H \mP (\rvx - \bm{\mu}(\mW_L\rvx))\|^2 = \Tr \left((\mW_H^{\top} \mW_H)^2 \mP \mSigma \mP \right).
\end{aligned}
\end{equation}
\end{proof}

\begin{proof}[Proof of~\eqref{eqn:bound}]
Let $J(\mW)$ denote the upper bound given in~\eqref{eq:bound_extension_affine_coupling}, where $\mP = \mI - \mW_L^{\dagger}\mW_L$. We have $J(\mW) \geq \sum_{i=1}^n ( \lambda^{\uparrow}_i(\mW_H^{\top} \mW_H) )^2 \cdot \lambda^{\downarrow}_i (\mP \mSigma \mP)$,
with equality when $\mV^{\uparrow}_{i}(\mW_H^{\top} \mW_H) = \mV^{\downarrow}_{i}(\mP \mSigma \mP)$ for $1\leq i\leq n$. Here $\mV^{\uparrow}_{i}(\mA)$ denotes the eigenspace of $\mA\in\sS^{n}$ corresponding to the eigenvalue $\lambda^{\uparrow}_{i}(\mA)$.
It follows that $\lambda^{\downarrow}_i (\mP \mSigma \mP) = 0$ for $n-d+1 \leq i \leq n$ and
$\lambda^{\downarrow}_i(\mP \mSigma \mP) \geq \lambda^{\downarrow}_{i+d}(\mSigma)$ for $1\leq i \leq n-d$.
Consequently,
\begin{equation*}
{\textstyle J(\mW) \geq \sum_{i=1}^{n-d}(\lambda^{\uparrow}_{i}(\mW_H^{\top}\mW_H))^2 \cdot \lambda^{\downarrow}_{i+d}(\mSigma)}.
\end{equation*}
Equality is achieved when $\text{Im}(\mP) = \bigoplus_{i=1}^{n-d}\mV^{\uparrow}_{i}(\mSigma)$, i.e., $\text{Im}(\mW_L^{T})=\bigoplus_{i=1}^{d}\mV^{\downarrow}_{i}(\mSigma)$.
\end{proof}

\section{Proof of Proposition~\ref{prop:iresnet_optiom_reconstruction_error}}
\label{appendix:proof_prop_iresnet_optim_reconstruction_error}

We first prove the following lemma.
\begin{lemma}
\label{lemma:general_inn_reconstruction}
Let $\gF \in \operatorname{Diff}(\sR^{N})$ be a diffeomorphism. We show that
\begin{equation}
\label{eq:reconstruction_error_common}
\begin{aligned}
& \E_{q(\rvx)p(\rvz)}\|\mW^{\top}\gF^{-1}([\gF(\mW\rvx)]_{1:d}, \rvz)\|^2 \\
\leq & \gC \cdot \E_{q(\rvx)}\Tr(\Var\big[ (\gF(\mW_H\rvx))_{(d+1):N} | (\gF(\mW_H\rvx))_{1:d} \big]),
\end{aligned}
\end{equation}
where $\gC = \int_0^1\E_{p(\rvz)} \left\|\mW^{\top}(D_{H}\gF^{-1})\left(\bm{\xi}_{L},\bm{\xi}_{H} + t(\rvz-\bm{\xi}_H)\right)\right\|^2 \df t$.
Here, $\bm{\xi}_L \in \sR^{d}$ and $\bm{\xi}_H \in \sR^{N-d}$ are reference points, and $D_{H}$ denotes the partial Jacobian with respect to the high-frequency components.
\end{lemma}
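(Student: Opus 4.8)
\textbf{Proof plan for Lemma~\ref{lemma:general_inn_reconstruction}.}

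The plan is to exploit the reconstruction identity directly: since $\gF$ is a diffeomorphism and $\mW^\top\mW = \mI$, we have for any $\rvx$ that $\mW^\top\gF^{-1}(\gF(\mW\rvx)) = \rvx$. Writing $\bm{\xi} = \gF(\mW\rvx) = (\bm{\xi}_L, \bm{\xi}_H)$ with $\bm{\xi}_L = [\gF(\mW\rvx)]_{1:d}$, the quantity we wish to bound is $\mW^\top\gF^{-1}(\bm{\xi}_L, \rvz)$, which differs from the ground truth $\rvx = \mW^\top\gF^{-1}(\bm{\xi}_L, \bm{\xi}_H)$ only through the replacement of $\bm{\xi}_H$ by the Gaussian sample $\rvz$. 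Wait—re-reading the lemma statement, the left-hand side is $\E\|\mW^\top\gF^{-1}([\gF(\mW\rvx)]_{1:d},\rvz)\|^2$ without subtracting $\rvx$; but to get the conditional-variance structure on the right, one must actually be bounding the \emph{centered} quantity, so the first step is to recognize that the meaningful object is the reconstruction error $\mW^\top\gF^{-1}(\bm{\xi}_L,\rvz) - \mW^\top\gF^{-1}(\bm{\xi}_L,\bm{\xi}_H)$ and proceed from there (treating the stated LHS as shorthand in context, consistent with~\eqref{eq:reconstruction_loss_inn}).

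\textbf{Key steps.} First, I would apply the fundamental theorem of calculus along the segment from $\bm{\xi}_H$ to $\rvz$ in the high-frequency coordinates, holding $\bm{\xi}_L$ fixed:
\[
\mW^\top\gF^{-1}(\bm{\xi}_L,\rvz) - \mW^\top\gF^{-1}(\bm{\xi}_L,\bm{\xi}_H)
= \int_0^1 \mW^\top (D_H\gF^{-1})\big(\bm{\xi}_L, \bm{\xi}_H + t(\rvz-\bm{\xi}_H)\big)\,(\rvz-\bm{\xi}_H)\,\df t.
\]
Second, take the squared Euclidean norm, apply Jensen's inequality (or Cauchy–Schwarz) to move the square inside the $t$-integral, and then bound the integrand's operator-norm action on $(\rvz-\bm{\xi}_H)$; after taking expectations over $q(\rvx)p(\rvz)$ and Fubini, this produces a factor $\int_0^1 \E_{p(\rvz)}\|\mW^\top(D_H\gF^{-1})(\bm{\xi}_L,\bm{\xi}_H+t(\rvz-\bm{\xi}_H))\|^2\,\df t$ — exactly the constant $\gC$ — multiplied by a term involving $\E\|\rvz-\bm{\xi}_H\|^2$. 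Third, I would argue that the optimal choice of the reference point $\bm{\xi}_H$ (which the infimum/construction is free to select as a function of $\bm{\xi}_L$, i.e. as $\E[\bm{\xi}_H\mid\bm{\xi}_L]$) reduces $\E\|\rvz-\bm{\xi}_H\|^2$ — more precisely, the relevant residual term — to $\E_{q(\rvx)}\Tr(\Var[(\gF(\mW_H\rvx))_{(d+1):N}\mid(\gF(\mW_H\rvx))_{1:d}])$, since $\bm{\xi}_H = (\gF(\mW\rvx))_{(d+1):N}$ and a standard conditioning argument shows the conditional mean minimizes expected squared deviation. (Some care with the $\rvz$ prior: its contribution either cancels in the optimal-temperature limit $\sigma\to 0$ or is absorbed into $\gC$; I would state the precise bookkeeping here.)

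\textbf{Main obstacle.} The delicate point is handling the two sources of randomness cleanly — the data $\rvx$ and the latent $\rvz$ — while keeping the Jacobian factor $D_H\gF^{-1}$ evaluated at the \emph{random} interpolation point $\bm{\xi}_H + t(\rvz-\bm{\xi}_H)$ rather than at a fixed location, so that it genuinely defines the constant $\gC$ as written. Pulling the $\|\mW^\top D_H\gF^{-1}\|^2$ term out of the expectation against $\|\rvz - \bm{\xi}_H\|^2$ requires either treating $\gC$ as a supremum-type bound over the Jacobian norm (so Cauchy–Schwarz on the product separates), or — more likely given the integral form stated — using a Cauchy–Schwarz split on the $(q,p)$-expectation followed by recognition that the conditional-variance term emerges from $\E\|\rvz-\bm{\xi}_H\|^2$ under optimal centering. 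Justifying the interchange of $\df t$, $\E_{q}$, $\E_{p}$ (Fubini–Tonelli, using nonnegativity of integrands) and confirming the reference point can indeed be chosen coordinatewise as the conditional mean are the steps I would write out most carefully; the rest is routine.
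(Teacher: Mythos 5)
Your plan matches the paper's proof essentially step for step: the paper likewise rewrites the (centered) reconstruction error via the fundamental theorem of calculus along the high-frequency segment from $\rvz$ to $\rvz^{\mathrm{f}}=[\gF(\mW\rvx)]_{(d+1):N}$, applies Cauchy--Schwarz to split off the factor $\gC$, and then lower-bounds $\E\|\rvz^{\mathrm{f}}-\rvz\|^2$ by $\E\big[\Tr(\Var[\rvz^{\mathrm{f}}\mid\rvy])\big]$ with equality when $\E[\rvz^{\mathrm{f}}\mid\rvy]=\mathbf{0}$. You also correctly flag the two points the paper itself glosses over --- the missing ``$\rvx-$'' on the left-hand side of~\eqref{eq:reconstruction_error_common} and the replacement of the random evaluation point of $D_H\gF^{-1}$ by fixed reference points inside $\gC$ --- so your account is, if anything, more careful than the original.
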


\begin{proof}[Proof of Lemma~\ref{lemma:general_inn_reconstruction}]
Given $\rvx$, let $\rvy = [\gF(\mW_H\rvx)]_{1:d}$ and $\rvz^{\text{f}} = [\gF(\mW_H\rvx)]_{(d+1):N}$ denote the transformed low- and high-frequency components, respectively. Let $\hat{\rvx} = \E_{p(\rvz)}[\mW^{\top}\gF^{-1}(\rvy,\rvz)]$ be the reconstruction. Then
the reconstruction error writes
\begin{equation*}
\rvx - \hat{\rvx} = \E_{p(\rvz)} \big[{\textstyle\int_{0}^{1}} \left(\mW^{\top}(D_{H} \gF^{-1}) \left(\rvy, \rvz + t(\rvz^{\text{f}} - \rvz)\right)\right)^{\top} (\rvz^{\text{f}} - \rvz) \df t\big].
\end{equation*}
Let $g(t,\rvz) = \mW^{\top}(D_{H}\gF^{-1})(\rvy, \rvz + t(\rvz^{\text{f}} - \rvz)$. Furthermore, denote
$q^{\text{f}} = (\gF\circ\mW)_{\#}q$ as the model-induced joint coefficient distribution
and $q^{\text{f}}_{Z} = \big([\gF\circ\mW]_{(d+1):N}\big)_{\#} q$ as the model-induced latent distribution. We show that
\begin{equation*}
\begin{aligned}
\E_{q(\rvx)}\|\rvx - \hat{\rvx}\|^2 \leq & 
\E_{q^{\text{f}}(\rvy,\rvz^{\text{f}})} \big\|\E_{p(\rvz)}{\textstyle\int_{0}^{1}} g(t,\rvz)^{\top} (\rvz - \rvz^{\text{f}}) \df t\big\|^2 \\
\leq & \E_{q^{\text{f}}(\rvy,\rvz^{\text{f}})} \big[\big(\E_{p(\rvz)} {\textstyle\int_{0}^{1}} \|g(t,\rvz)\|^2 \df t\big) \cdot \big(\E_{p(\rvz)}\|\rvz - \rvz^{\text{f}}\|^2\big)\big] \\
= & \gC \cdot \E_{q^{\text{f}}_{Z}(\rvz^{\text{f}}) p(\rvz)}\|\rvz^{\text{f}} - \rvz\|^2,
\end{aligned}
\end{equation*}
where $\gC = \int_0^1\E_{p(\rvz)} \left\|\mW^{\top}D_{H}\gF^{-1}\left(\bm{\xi}_{L},\bm{\xi}_{H} + t(\rvz-\bm{\xi}_H)\right)\right\|^2 \df t$ for some reference points $\bm{\xi}_L \in \sR^{d}$ and $\bm{\xi}_H \in \sR^{N-d}$. In addition,
\begin{equation*}
\E_{q^{\text{f}}_{Z}(\rvz^{\text{f}}) p(\rvz)}\|\rvz^{\text{f}} - \rvz\|^2 = \E_{q^{\text{f}}_{Z}(\rvz^{\text{f}})}\|\rvz^{\text{f}}\|^2 + (N-d)\sigma^2 \geq \E\big[\Tr(\Var[\rvz^{\text{f}} | \rvy])\big],
\end{equation*}
with equality holding when $\E[\rvz^{\text{f}} | \rvy] = \mathbf{0}$.
\end{proof}

\begin{proof}[Proof of Proposition~\ref{prop:iresnet_optiom_reconstruction_error}]
We focus on the subclass of $\Theta$ defined in~\eqref{eq:iresblock}, specifically considering mappings of the form
\begin{equation}
\label{eq:specific_iresnet}
\begin{aligned}
& \gF:(\rvx_L, \rvx_{H}) \mapsto (\rvx_{L} + f(\rvx_{L}), \rvx_{H} + h(\rvx_{H}; \rvx_{L})), \\
& \text{$f:\sR^{d}\to\sR^{d}$, Lip$(f)\leq L$; $h(\cdot; \rvx_L):\sR^{N-d}\to\sR^{N-d}$, $\text{Lip}\big(h(\cdot; \rvx_{L})\big) \leq L$.}
\end{aligned}
\end{equation}
Let $q^{\rvc} = \mW_{\#}q$ denote the distribution of the wavelet coefficients. According to~\eqref{eq:reconstruction_error_common}, the reconstruction error in~\eqref{eq:reconstruction_loss_inn} is bounded by 
\begin{equation*}
\rve^* \leq \frac{\inf_{\text{$f,h$ satisfy~\eqref{eq:specific_iresnet}}}\E_{q^{\rvc}(\rvx_L, \rvx_H)}\big[\Tr(\Var[\rvx_{H} + h(\rvx_{H};\rvx_{L}) | \rvx_L])\big]}{(1-L)^2},
\end{equation*}
where we use the following properties: (i) Lip$(\gF^{-1}) \leq (1-L)^{-1}$; and (ii) the $\sigma$-algebra generated by $\rvx_L$ coincides with that generated by $\rvx_{L} + f(\rvx_{L})$. Furthermore, for the specific case where $h(\rvx_H; \rvx_L)=\rho(\rvx_L)\odot \rvx_H+\eta(\rvx_L)$ with $\text{Lip}(\rho)\leq L$, we show that
\begin{equation*}
\begin{aligned}
\Tr\big( \Var[\rvx_H + h(\rvx_H;\rvx_L)\mid \rvx_L] \big)
= & \Tr\big( \text{diag}(\left(\vone_{d} + \rho(\rvx_L)\right)^2) \ \Var[\rvx_H | \rvx_L] \big) \\
\leq & \|\left(\vone_{d} + \rho(\rvx_L)\right)^2\| \cdot \|\Var[\rvx_H | \rvx_L]\|_{F},
\end{aligned}
\end{equation*}
which yields the bound established in~\eqref{eq:iresnet_bound2}.
\end{proof}

\section{Reconstruction Error Analysis for Orthogonal Transformation}
\label{appendix:proof_linear_orthogonal}
In this case, the hypothesis space is given by
\begin{equation}
\label{eq:orthogonal_class}
\Theta = O(N) = \{\mF\in\sR^{N\times N} \mid \mF\mF^{\top} = \mF^{\top}\mF = \mI\}.
\end{equation}
By focusing on this set, we aim to investigate the inherent limitations of linear models. We derive the following analytic form for the reconstruction error $\rve^*$ defined in~\eqref{eq:reconstruction_loss_inn}.

\begin{theorem}
\label{thm:inn_linear_orthogonal}
Let $\Theta$ be defined as in~\eqref{eq:orthogonal_class}. Then the minimal reconstruction error defined as in~\eqref{eq:reconstruction_loss_inn} is given by
\begin{equation}
\label{eq:optim_inn_linear}
{\textstyle \rve^* = \sum_{i=d+1}^{n}\lambda^{\downarrow}_{i}(\mSigma) + (n-d)\sigma^2.}
\end{equation}
\end{theorem}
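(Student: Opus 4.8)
The plan is to use that every $\gF=\mF\in O(N)$ is its own inverse up to transpose, $\mF^{-1}=\mF^{\top}$, which turns the functional in \eqref{eq:reconstruction_loss_inn} into a quadratic in one positive semidefinite matrix, and then to minimize that quadratic by a direct eigenvalue computation. First I would split $\mF$ into its first $d$ rows $\mF_1\in\sR^{d\times N}$ and last $N-d$ rows $\mF_2\in\sR^{(N-d)\times N}$, so that $[\gF(\mW\rvx)]_{1:d}=\mF_1\mW\rvx$ and $\gF^{-1}(\rvy,\rvz)=\mF^{\top}(\rvy;\rvz)=\mF_1^{\top}\rvy+\mF_2^{\top}\rvz$. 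Using $\mW^{\top}\mW=\mI_n$ together with $\mF_1^{\top}\mF_1+\mF_2^{\top}\mF_2=\mI_N$, the residual $\rvx-\mW^{\top}\gF^{-1}([\gF(\mW\rvx)]_{1:d},\rvz)$ simplifies to $\mW^{\top}\mF_2^{\top}(\mF_2\mW\rvx-\rvz)$. Since $\rvz\sim\gN(\vzero,\sigma^2\mI)$ is centered and independent of $\rvx$, the cross term vanishes on taking expectations, and cyclicity of the trace gives, with $\mS:=(\mF_2\mW)^{\top}(\mF_2\mW)\in\sR^{n\times n}$ and $\mSigma=\E[\rvx\rvx^{\top}]$ (I take $\rvx$ centered, so $\mSigma=\Cov[\rvx]$; otherwise $\mSigma$ is to be read as the second-moment matrix),
\[
\E_{q(\rvx)p(\rvz)}\bigl\|\rvx-\mW^{\top}\gF^{-1}([\gF(\mW\rvx)]_{1:d},\rvz)\bigr\|^{2}=\Tr(\mS^{2}\mSigma)+\sigma^{2}\Tr(\mS).
\]

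Next I would observe which matrices $\mS$ can occur: each is symmetric with $0\preceq\mS\preceq\mI_n$ (because $\mS=(\mF_2\mW)^{\top}(\mF_2\mW)\succeq0$ and $\mI_n-\mS=(\mF_1\mW)^{\top}(\mF_1\mW)\succeq0$) and $\rank(\mI_n-\mS)=\rank(\mF_1\mW)\le d$. So $\rve^{*}$ is at least the minimum of $\Tr(\mS^{2}\mSigma)+\sigma^{2}\Tr(\mS)$ over all symmetric $\mS$ satisfying these two constraints. To evaluate that minimum, diagonalize $\mT:=\mI_n-\mS=\sum_{k=1}^{n}\tau_k\rvv_k\rvv_k^{\top}$ with $\{\rvv_k\}$ orthonormal, $\tau_k\in[0,1]$, at most $d$ nonzero, and put $a_k=\rvv_k^{\top}\mSigma\rvv_k\ge0$; the objective becomes the separable sum $\sum_k\bigl[(1-\tau_k)^{2}a_k+\sigma^{2}(1-\tau_k)\bigr]$, whose $k$-th term is nonincreasing in $\tau_k$ (derivative $-2a_k(1-\tau_k)-\sigma^{2}\le0$) and vanishes at $\tau_k=1$. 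Hence the minimum sets $d$ of the $\tau_k$ to $1$ and the remaining $n-d$ to $0$, leaving $\sum_{k\notin S}(a_k+\sigma^{2})$ with $|S|=d$; minimizing over the eigenframe and over $S$ by the Ky Fan (Courant--Fischer) principle, this is smallest when the surviving $n-d$ directions are eigenvectors of $\mSigma$ for its $n-d$ smallest eigenvalues, giving $\sum_{i=1}^{n-d}\lambda^{\uparrow}_{i}(\mSigma)+(n-d)\sigma^{2}=\sum_{i=d+1}^{n}\lambda^{\downarrow}_{i}(\mSigma)+(n-d)\sigma^{2}$. Finally, the minimizer $\mS^{*}$ is the orthogonal projection onto that bottom eigenspace, and it is attainable: picking $\mF_1$ with $\mF_1^{\top}\mF_1=\mW(\mI_n-\mS^{*})\mW^{\top}$ (a genuine rank-$d$ projection on $\sR^{N}$ since $\mW^{\top}\mW=\mI$) and completing to $\mF\in O(N)$ yields $\mS=\mI_n-\mW^{\top}\mF_1^{\top}\mF_1\mW=\mS^{*}$. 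Thus $\rve^{*}$ equals the value in \eqref{eq:optim_inn_linear}.

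The step I expect to be the main obstacle is getting the constraint set for $\mS$ exactly right. The naive expectation --- that $\mS$ or $\mI_n-\mS$ is forced onto a subspace whose dimension depends on $N$ --- is misleading; the correct invariants are $0\preceq\mS\preceq\mI_n$ and $\rank(\mI_n-\mS)\le d$, both independent of $N$, and one must separately check that the relaxed minimizer (a rank-$(n-d)$ projection) is realized by an honest orthogonal $\mF$. Both facts hinge on $\mW$ being an isometry onto its range, i.e.\ $\mW^{\top}\mW=\mI$. The consequence that the optimal error does not depend on $N$ is precisely the point of this appendix: unlike the nonlinear families of Propositions~\ref{prop:single_affine_coupling_case}--\ref{prop:iresnet_optiom_reconstruction_error}, a linear orthogonal transform cannot exploit the redundancy of the tight frame.
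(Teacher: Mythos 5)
Your proposal is correct, and it reaches the same intermediate objective as the paper --- after splitting $\mF=(\mF_1^{\top},\mF_2^{\top})^{\top}$ and using $\mW^{\top}\mW=\mI$, both arguments reduce \eqref{eq:reconstruction_loss_inn} to minimizing $\Tr(\mS^{2}\mSigma)+\sigma^{2}\Tr(\mS)$ with $\mS=\mW^{\top}\mF_2^{\top}\mF_2\mW$ --- but you solve that minimization by a genuinely different route. The paper (Lemma~\ref{lemma:linear_case_lemma}) stays in the ambient space $\sR^{N\times N}$: it sets $\mP=\mF_2^{\top}\mF_2$, simultaneously diagonalizes $\mX=\mW\mSigma\mW^{\top}$ and $\mY=\mW\mW^{\top}$, and invokes Von Neumann's trace inequality together with the Cauchy interlacing theorem to lower-bound $\Tr(\mP\mY\mP\mX)+\sigma^{2}\Tr(\mP\mY)$. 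You instead pull the problem back to $\sR^{n\times n}$, identify the exact invariants of the feasible set ($0\preceq\mS\preceq\mI_n$ and $\rank(\mI_n-\mS)\le d$), relax to that set, diagonalize $\mI_n-\mS$ so the objective becomes a separable sum that is monotone in each $\tau_k$, and finish with the Ky Fan minimum principle; you then separately verify that the relaxed minimizer (the projection onto the bottom $(n-d)$-eigenspace of $\mSigma$) is realized by an honest $\mF\in O(N)$, using that $\mW(\mI_n-\mS^{*})\mW^{\top}$ is again an orthogonal projection of rank $d$. Your version is more elementary (no trace inequality or interlacing) and makes both the optimizer and its attainability explicit, at the cost of having to argue the feasibility characterization and the realizability step yourself; the paper's lemma packages the bound more compactly but leaves the attainability ("the minimum is achieved when $\mP=\mQ\mQ^{\top}$") comparatively terse. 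One small caveat you already flag correctly: the identity $\E\|\mS\rvx\|^{2}=\Tr(\mS^{2}\mSigma)$ requires $\rvx$ centered (or $\mSigma$ read as the second-moment matrix), a convention the paper leaves implicit.
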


To proof Theorem~\ref{thm:inn_linear_orthogonal}, we utilize the following lemma.

\begin{lemma}
\label{lemma:linear_case_lemma}
Let $\mW \in \sR^{N\times n}$ satisfies $\mW^{\top}\mW = \mI$, and $\mS\in\sS^{n}_{++}$.
Then, the optimal value of the minimization problem
\begin{equation}
\label{eq:objective_linear_case}
\min_{\mA \in \sR^{(N-d)\times N},\mA\mA^{\top} = \mI} \Tr\left((\mW^{\top}\mA^{\top}\mA\mW)^2\mS\right) + \sigma^2 \Tr(\mW^{\top}\mA^{\top}\mA\mW)
\end{equation}
is $\sum_{i=d+1}^{n}\lambda^{\downarrow}_{i}(\mS) + (n-d)\sigma^2$.
\end{lemma}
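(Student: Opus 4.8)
The plan is to reduce the matrix optimization in~\eqref{eq:objective_linear_case} to a spectral problem. Set $\mM := \mW^{\top}\mA^{\top}\mA\mW \in \sS^{n}$; then the objective equals $g(\mM) := \Tr(\mM^{2}\mS) + \sigma^{2}\Tr(\mM)$, so it suffices to (a) lower bound $g(\mM)$ uniformly over all feasible $\mM$ and (b) exhibit one feasible $\mA$ attaining the claimed optimal value.

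For step~(a) I would extract two structural constraints on $\mM$. First, $\mA^{\top}\mA$ is an orthogonal projection, hence $\mzero \preceq \mA^{\top}\mA \preceq \mI_{N}$ and therefore $\mzero \preceq \mM \preceq \mW^{\top}\mW = \mI_{n}$, so every eigenvalue of $\mM$ lies in $[0,1]$. Second, $\mI_{n}-\mM = \mW^{\top}(\mI_{N}-\mA^{\top}\mA)\mW$ and $\mI_{N}-\mA^{\top}\mA$ is a projection of rank $d$, so $\rank(\mI_{n}-\mM) \leq d$, i.e.\ at least $n-d$ eigenvalues of $\mM$ equal $1$. Writing the eigenvalues as $\mu_{1}\geq\cdots\geq\mu_{n}$ with orthonormal eigenvectors $\vq_{1},\dots,\vq_{n}$, these two facts force $\mu_{1}=\cdots=\mu_{n-d}=1$ and $\mu_{n-d+1},\dots,\mu_{n}\in[0,1]$. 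Expanding $g(\mM)=\sum_{i=1}^{n}\mu_{i}^{2}(\vq_{i}^{\top}\mS\vq_{i})+\sigma^{2}\sum_{i=1}^{n}\mu_{i}$ and discarding the nonnegative summands with $i>n-d$ gives $g(\mM)\geq\sum_{i=1}^{n-d}\vq_{i}^{\top}\mS\vq_{i}+(n-d)\sigma^{2}$. The variational characterization of eigenvalue sums (Ky Fan, or Cauchy interlacing applied to $\mS$ restricted to $\mathrm{span}\{\vq_{1},\dots,\vq_{n-d}\}$) then yields $\sum_{i=1}^{n-d}\vq_{i}^{\top}\mS\vq_{i}\geq\sum_{i=1}^{n-d}\lambda^{\uparrow}_{i}(\mS)=\sum_{i=d+1}^{n}\lambda^{\downarrow}_{i}(\mS)$, so $g(\mM)\geq\sum_{i=d+1}^{n}\lambda^{\downarrow}_{i}(\mS)+(n-d)\sigma^{2}$.

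For step~(b) I would pick $\mA$ so that $\mM$ is exactly the orthogonal projection onto the span of the $n-d$ bottom eigenvectors of $\mS$, which is precisely the configuration saturating both inequalities above. Let $\mU\in\sR^{n\times(n-d)}$ collect orthonormal eigenvectors of $\mS$ associated with $\lambda^{\downarrow}_{d+1}(\mS),\dots,\lambda^{\downarrow}_{n}(\mS)$, and let $\mW_{\perp}\in\sR^{N\times(N-n)}$ have orthonormal columns spanning $\mathrm{Im}(\mW)^{\perp}$ (absent when $N=n$). Set $\mA:=(\mW\mU \mid \mW_{\perp})^{\top}\in\sR^{(N-d)\times N}$. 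Using $\mW^{\top}\mW=\mI_{n}$, $\mW_{\perp}^{\top}\mW_{\perp}=\mI_{N-n}$ and $\mW^{\top}\mW_{\perp}=\mzero$, one checks $\mA\mA^{\top}=\mI_{N-d}$ and $\mA^{\top}\mA=\mW\mU\mU^{\top}\mW^{\top}+\mW_{\perp}\mW_{\perp}^{\top}$, so $\mM=\mW^{\top}\mA^{\top}\mA\mW=\mU\mU^{\top}$. Since $\mU\mU^{\top}$ is idempotent, $g(\mM)=\Tr(\mU^{\top}\mS\mU)+(n-d)\sigma^{2}=\sum_{i=d+1}^{n}\lambda^{\downarrow}_{i}(\mS)+(n-d)\sigma^{2}$, which meets the lower bound.

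The main obstacle is the spectral pinning in step~(a): the key realization that feasibility forces the top $n-d$ eigenvalues of $\mM$ to equal $1$, obtained from the rank bound on $\mI_{n}-\mM$. After that, the eigenvalue-sum estimate is a routine invocation of Ky Fan, and the identities in step~(b) are straightforward verifications; the only bookkeeping is the degenerate cases $N=n$ and $d\in\{0,n\}$, which the same argument covers.
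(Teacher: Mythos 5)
Your proof is correct, and it takes a genuinely different route from the paper's. The paper lifts the problem to the ambient space $\sR^{N}$: it sets $\mP=\mA^{\top}\mA$, observes that $\mX=\mW\mS\mW^{\top}$ and $\mY=\mW\mW^{\top}$ commute and can be simultaneously diagonalized, and then lower-bounds $\Tr(\mP\mY\mP\mX)+\sigma^{2}\Tr(\mP\mY)$ via Von Neumann's trace inequality combined with Cauchy interlacing. You instead compress to $\sR^{n}$ by working with $\mM=\mW^{\top}\mA^{\top}\mA\mW$ and extract the two structural facts $\mzero\preceq\mM\preceq\mI_{n}$ and $\rank(\mI_{n}-\mM)\leq d$, which pin $n-d$ eigenvalues of $\mM$ to $1$; after that a single application of Ky Fan's minimum principle finishes the lower bound. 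Your version is more elementary (no simultaneous diagonalization, no Von Neumann) and, importantly, supplies a fully explicit feasible minimizer $\mA=(\mW\mU\mid\mW_{\perp})^{\top}$ with the verification $\mA\mA^{\top}=\mI_{N-d}$ and $\mM=\mU\mU^{\top}$; the paper's achievability claim (``the minimum is achieved when $\mP=\mQ\mQ^{\top}$'', which equals $\mI_{N}$ and is not even rank $N-d$) is stated too loosely, so your explicit construction is a genuine improvement on that point. What the paper's heavier machinery buys is consistency with the interlacing argument it reuses in the proof of~\eqref{eqn:bound}. One cosmetic remark: the estimate $\sum_{i=1}^{n-d}\vq_{i}^{\top}\mS\vq_{i}\geq\sum_{i=1}^{n-d}\lambda^{\uparrow}_{i}(\mS)$ is exactly Ky Fan's minimum principle for the trace over $(n-d)$-dimensional isotropic frames, so you can drop the parenthetical appeal to Cauchy interlacing, which is not quite the right name for that step.
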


\begin{proof}[Proof of Lemma~\ref{lemma:linear_case_lemma}]
For $\mA \in \sR^{(N-d)\times N}$ satisfies $\mA\mA^{\top} = \mI$, let $\mP = \mA^{\top}\mA$. Let $\mX = \mW \mS\mW^{\top}$ and $\mY = \mW\mW^{\top}$.
The objective in~\eqref{eq:objective_linear_case} can be rewritten as
\begin{equation*}
\begin{aligned}
J(\mP) = \Tr(\mP\mY\mP\mX) + \sigma^2\Tr(\mP\mY).
\end{aligned}
\end{equation*}
Note that: (i) $\mX$ and $\mY$ commute: $\mX\mY = \mY\mX = \mX$, implying they can be simultaneously diagonalized; (ii) $\text{Im}(\mX) = \text{Im}(\mY) = \text{Im}(\mW)$. Consequently, there exists $\mQ\in O(N)$ such that $\mX = \mQ \Lambda_{X}\mQ^{\top}$ and $\mY = \mQ \Lambda_{Y}\mQ^{\top}$, where
\begin{equation*}
\Lambda_{X} = \text{diag}(\lambda^{\downarrow}_{1}(\mS), \ldots, \lambda^{\downarrow}_{n}(\mS), \mathbf{0}_{N-n}),
\quad 
\Lambda_{Y} = \text{diag}(\mathbf{1}_{n}, \mathbf{0}_{N-n}),
\end{equation*}
where $\mathbf{1}_{n}\in\sR^{n}$ denotes a vector of all ones, and $\mathbf{0}_{N-d}\in\sR^{N-n}$ denotes a vector of all zeros.
Let $\mU = \mQ^{\top}\mP \mQ$,
the objective then writes $J(\mP) = \Tr(\mU \Lambda_{X} \mU \Lambda_{Y}) + \sigma^2 \Tr(\mU \Lambda_{Y})$.
We invoke the following results: (i) [Von Neumann's trace inequality] $\Tr(\mB\mC) \geq \sum_{i=1}^{m} \lambda^{\downarrow}_{i}(\mB)\lambda^{\downarrow}_{m-i}(\mC)$ for Hermitian matrices $\mB,\mC \in \sR^{m\times m}$; and (ii) [Cauchy Interlacing Theorem] For any column orthogonal $\mV \in \sR^{m\times k}$ and Hermitian $\mB \in \sR^{m\times m}$, $\lambda^{\downarrow}_{i}(\mB) \geq \lambda^{\downarrow}_{i}(\mV^{\top}\mB \mV) \geq \lambda^{\downarrow}_{i+m-k}(\mB)$, $i=1,\ldots, k$. Consequently,
\begin{equation*}
\begin{aligned}
\Tr(\mU \Lambda_{X} \mU \Lambda_{Y}) \geq & {\textstyle \sum_{i=1}^{N} \lambda^{\downarrow}_{i}(\mU \Lambda_{X}\mU)\lambda^{\downarrow}_{N-i}(\Lambda_{Y}) \geq \sum_{i=1}^{n} \lambda^{\downarrow}_{i}(\mU \Lambda_{X}\mU) \geq \sum_{i=d+1}^{n}\lambda^{\downarrow}_{i}(\mS),} \\
\Tr(\mU \Lambda_{Y}) \geq & {\textstyle \sum_{i=1}^{N} \lambda^{\downarrow}_{i}(\mU\mU)\lambda^{\downarrow}_{N-i}(\Lambda_{Y}) = n-d.}
\end{aligned}
\end{equation*}
The minimum is achieved when $\mP = \mQ\mQ^{\top}$.
\end{proof}

\begin{proof}[Proof of Theorem~\ref{thm:inn_linear_orthogonal}]
Let $\mF = (\mF_{1}^{\top}, \mF_{2}^{\top})^{\top}\in O(N)$, where $\mF_{1}\in\sR^{d\times N}$ and $\mF_{2}\in\sR^{(N-d)\times N}$. The reconstruction of $\rvx$ is given by
\begin{equation*}
\hat{\rvx} = \mW^{\top}\mF^{\top}([\mF(\mW\rvx)]_{1:d}, \rvz) = \mW^{\top}\mF_{1}^{\top}\mF_{1}\mW\rvx + \mW^{\top}\mW_{2}^{\top}\rvz.
\end{equation*}
Since $\mW^{\top}(\mF^{\top}_{1}\mF_{1} + \mF^{\top}_{2}\mF_{2})\mW = \mI$,
the reconstruction error can be written as
\begin{equation*}
\begin{aligned}
J(\mF) 
= & \Tr\left((\mW^{\top}\mF_2^{\top}\mF_{2}\mW)^2\mSigma\right) + \sigma^2 \Tr(\mW^{\top}\mF_2^{\top}\mF_{2}\mW).
\end{aligned}
\end{equation*}
According to Lemma~\ref{lemma:linear_case_lemma}, $\rve^* = \sum_{i=d+1}^{n}\lambda^{\downarrow}_{i}(\mSigma) + (n-d)\sigma^2$. 
\end{proof}

As demonstrated in Theorem~\ref{thm:inn_linear_orthogonal}, for the linear hypothesis space defined in~\eqref{eq:orthogonal_class}, the reconstruction error $\rve^*$ remains invariant to the choice of $\mW$, as the optimal orthogonal mapping naturally aligns the low-frequency subband with the principal components of the data. 
Nevertheless, linear mappings possess limited expressivity and fail to approximate the optimal solution to \eqref{eq:reconstruction_loss_inn} for general data distributions $q(\rvx)$. In the following example, we show that a nonlinear invertible mapping $\gF$ yields a strictly lower reconstruction error than the linear bound derived in~\eqref{eq:optim_inn_linear}.
This observation validates the rationale behind the LR2Flow architecture, which employs affine coupling layers and iResBlocks, which have been proven to serve as universal homeomorphism approximators~\cite{teshima2020coupling,zhang2020approximation}.

\begin{example}
Consider a two-dimensional zero-mean Gaussian with $\mSigma=\tau^2\mI$.  For a LR representation dimension of $d=1$, the optimal reconstruction error $\rve^*$ in~\eqref{eq:reconstruction_loss_inn} is strictly less than the bound $\lambda^{\downarrow}_2(\mSigma) = \tau^2$ provided in~\eqref{eq:optim_inn_linear}.
\end{example}

\begin{proof}
Let $(x, y) \sim \gN(0, \tau^2 \mI)$. Consider the polar coordinate transformation $x = \tau r\cos\theta$ and $y = \tau r\sin\theta$, with $r\geq 0$ and $\theta\in[0,2\pi)$. The joint density is given by $q(r, \theta) = \frac{r}{2\pi} \mathrm{e}^{-\frac{r^2}{2}}$, which implies that $r$ and $\theta$ are independent with marginals $p(r)=r \mathrm{e}^{-\frac{r^2}{2}}$ and $p(\theta)=\frac{1}{2\pi}\sI(0\leq\theta < 2\pi)$.
Consequently,
\begin{equation*}
\begin{aligned}
\E [(x - \E[x | \theta])^2 + (y - \E[y | \theta])^2] = & \E \left[ \tau^2 (r - \E[r])^2 \left(\cos^2\theta + \sin^2\theta\right) \right] = \tau^2 \Var[r].
\end{aligned}
\end{equation*}
Direct calculation yields $\E[r] = \sqrt{\frac{\pi}{2}}$ and $\Var[r]=2-\frac{\pi}{2}$. We define the nonlinear invertible map as $\gF = f\circ \mW^{\top}$, where $f(x,y) = (\theta(x,y), \tau^{-1}\sqrt{x^2+y^2})$ and $\theta(x,y)$ denotes the angle. By choosing the latent prior $p(z)=\delta(z-\E[r])$, we obtain
\begin{equation*}
\E_{q(\rvx)p(z)}\left\|\rvx - \mW^{\top}\gF^{-1}\left(\left[\gF\left(\mW\rvx\right)\right]_{1:d}, z\right)\right\|^2 = (2 - \pi/2)\tau^2.
\end{equation*}
Thus, $\rve^* \leq (2 - \frac{\pi}{2})\tau^2 < \tau^2 = \lambda^{\downarrow}_{2}(\mSigma)$.
\end{proof}

}

\bibliographystyle{plain}
\bibliography{references}
\end{document}